\title{Contextualizing Enhances \\ Gradient Based Meta Learning}
\newtheorem{prop}{Proposition}
\newcommand{\bd}[1]{\boldsymbol #1}
\author{%
  Evan~Vogelbaum\thanks{Equal contribution.} \\
  MIT \\
  \texttt{evanv@mit.edu} \\
   \And
    Rumen Dangovski$^*$ \\
   MIT \\
   \texttt{rumenrd@mit.edu} \\
   \AND
   Li Jing \\
   MIT \\
   \texttt{ljing@mit.edu} \\
   \And
   Marin Solja\v{c}i\'c \\
   MIT \\
   \texttt{soljacic@mit.edu} \\
}
\begin{document}

\maketitle

\begin{abstract}
Meta learning methods have found success when applied to few shot classification problems, in which they quickly adapt to a small number of labeled examples. Prototypical representations, each representing a particular class, have been of particular importance in this setting, as they provide a compact form to convey information learned from the labeled examples. However, these prototypes are just one method of representing this information, and they are narrow in their scope and ability to classify unseen examples. We propose the implementation of \emph{contextualizers}, which are generalizable prototypes that adapt to given examples and play a larger role in classification for gradient-based models. We demonstrate how to equip meta learning methods with contextualizers and show that their use can significantly boost performance on a range of few shot learning datasets. We also present figures of merit demonstrating the potential benefits of contextualizers, along with analysis of how models make use of them. Our approach is particularly apt for low-data environments where it is difficult to update parameters without overfitting. Our implementation and instructions to reproduce the experiments are available at \url{https://github.com/naveace/proto-context/}.

\end{abstract}

\section{Introduction}


With the rise of deep learning, models have become remarkably successful at mastering challenging, large scale image classification tasks by training on enormous amounts of data~\citep{NIPS2012_4824, guo2016deep,  litjens2017survey}. However, this reliance on ``big data'' remains a key flaw in most deep learning models, limiting their use cases and making them expensive to train \citep{jin2016scale, peng2018macro, kolavr2016deep}. To address this problem, many have turned to few shot learning methods: algorithms and architectures designed to perform tasks such as image classification with very small amounts of task-specific data. Much progress has been made in this area \citep{vinyals2016,triantafillou2017fewshot,snell-etal-2017-prototypical,  Sung_2018},
however many of these models nonetheless remain specific to the tasks for which they are designed to solve \citep{ren2018meta, li2017metasgd, kang2018transferable, UnifiedApproachRahman}.

Another method of addressing the ``big data'' problem with more generalizability has been found in \emph{meta learning}, a field very closely related to few shot learning, in which models and algorithms are designed to quickly fine-tune on new tasks either through transferring knowledge from previously seen data samples or optimizing themselves for fast adaptation in just a few gradient updates \citep{vanschoren2018metalearning}. An example of the latter is \emph{model agnostic meta learning} (MAML), an algorithm designed by \citet{finn2017}  that excels at training neural networks for meta learning problems, and which was improved upon by \citet{nichol2018firstorder, antoniou2019, behl2019alpha, song2019esmaml} and many more.

\emph{Metric learning}, such as that proposed by \citet{snell-etal-2017-prototypical}, has also been shown to be successful in the area of few shot learning. \emph{Prototypes}---averages of examples that share the same class---were used by~\citet{snell-etal-2017-prototypical} to make classification decisions by measuring how close unclassified samples are to each prototype. \citet{triantafillou2019} helped marry prototypes with MAML through \emph{ProtoMAML}, a version of MAML that initializes the \emph{head} of the network (the final layer that maps features to output classes) with scaled versions of the prototypes, and as a result it is end-to-end trainable with MAML. 

These methods are promising; however, current models that combine few shot learning with meta learning fail to produce \emph{task specific} initializations for heads. In addition, although the ideas of task-specific feature spaces have been explored \citep{oreshkin2018tadam, perez2017film}, these have yet to be combined with the advances in head initialization proposed in \citep{triantafillou2019}. Finally, most meta-learning models rely on several gradient steps to adapt the model to a particular task and can be prone to overfitting, a problem that has gained attention before \citep{antoniou2019, zintgraf2018fast, li2017metasgd, raghu2019rapid}. We address these problems through the use of ``contextualizers:'' generalized prototypes that produce (\emph{i}) a task-specific initialization (\emph{ii}) a task-specific feature space and (\emph{iii}) achieve near peak accuracy in as little as one inner loop gradient step. 


In this paper, we focus on applying contextualization to the problem of few shot classification. This setup consists of training on separate \emph{tasks}, where each task consists of a \emph{support} set $\mathbf{S}=\left(\mathbf{X},\mathbf{y}\right)\equiv \left\{ (\mathbf{x}_j, y_j)\right\}_{j=1}^{|\mathbf{S}|}$ and a \emph{query} set $\mathbf{Q}=\left(\mathbf{X}',\mathbf{y}'\right)\equiv \left\{ (\mathbf{x}_j', y_j')\right\}_{j=1}^{|\mathbf{Q}|}$ of images and labels~\citep{snell-etal-2017-prototypical}. 
Each task consists of \emph{n} different classes with \emph{k} examples per class in the support set. This type of learning is referred to as $n$-\emph{way} $k$-\emph{shot}, and we will use this terminology 
throughout~\citep{vinyals2016}.
We fine-tune the model on the samples in $\mathbf{S}$ and then test the fine-tuned model on the samples in $\mathbf{Q}$. The performance on $\mathbf{Q}$ is used to \emph{meta train}
the model, which in the \emph{second order} version of MAML consists of propagating the loss through the update made from the gradient on $\mathbf{S}$ to the base parameters, and thus requires the calculation of Hessians (second order gradients). This method of updating can be impractical due to size and time constraints; hence, we focus on the \emph{first order} setting of MAML~\citep{finn2017}. In this setting, we begin with a model at a set of \emph{base parameters}. Next, we fine-tune the model through gradient descent on $\mathbf{S}$ and then evaluate it on $\mathbf{Q}$.
Finally, we apply the gradient of the loss on $\mathbf{Q}$ with respect to the tuned parameters directly to the base parameters:
\begin{align*}
    L_\mathbf{S}\left(\bd{\theta}^{(0)}\right) & = L\left(\bd{\theta}^{(0)}, \mathbf{S}\right)  \tag*{loss on the support $\mathbf{S}$}\\
    \bd{\theta}^{(i+1)} & = \bd{\theta}^{(i)} - \alpha \cdot \nabla_{\bd{\theta}^{(i)}} L_\mathbf{S}\left(\bd{\theta}^{(i)}\right) \tag*{inner loop steps $i=1,\dots,M$} \\
    L_\mathbf{Q}\left( \bd{\theta}^{(M)}\right) & = L\left(\bd{\theta}^{(M)}, \mathbf{Q}\right) \tag*{loss on the query $\mathbf{Q}$} \\
    \bd{\theta}^{(0)} & = \bd{\theta}^{(0)} - \beta \cdot \nabla_{\bd{\theta}^{(M)}} L_\mathbf{Q}\left(\bd{\theta}^{(M)}\right), \tag*{outer loop update of initial weights $\bd{\theta}^{(0)}$}
\end{align*}
where $L$ is our loss function, $\bd{\theta}^{(0)}$ are the base parameters,
 $M$ is the number of times we update our parameters on the support set (this is called \emph{inner loop} adaptation), and $\alpha$ and $\beta$ are our inner loop and base update (\emph{outer loop}) learning rates.
 \citet{finn2017} showed that first order MAML often performs comparably to the second order version.
 We introduce an augmentation to the forward pass of a network trained using MAML that allows for the creation of task-specific intializations for the \emph{head} of the model as well as the creation of task-specific feature spaces. Our main contributions are: 
\begin{enumerate}
    \item We propose a new method of meta learning that combines task-specific feature spaces with task-specific head initializations and demonstrate its ability to outperform conventional gradient methods on a range of few shot learning datasets.
    \item We present empirical analysis of our contextualization mechanism demonstrating that our contextualizers play a major role in the model.
    \item We introduce a new figure of merit, \emph{intra-class similarity}, to measure upstream benefits our contextualizers may have in the training process.
\end{enumerate}


\section{The Concept of Contextualization}
Contextualization can be thought of as adapting a model's features and initialization of parameters to a task without a gradient update. We present the general mechanism of contextualization for a model's features below and in Figure~\ref{imageprotocontext}, and then detail how it can be used to construct task-specific initializations.
\begin{figure}[t]
\centering

 \includegraphics[width=\textwidth]{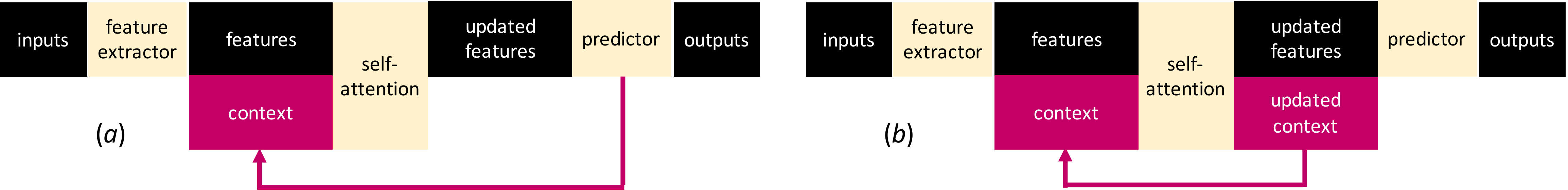}
\caption{We propose two forms of contextualization for the features. One uses the weights of the head of the model to contextualize each forward pass (\emph{a}), the other uses a set of context vectors continuously updated through self-attention to attend to the features (\emph{b}).}
 \label{imageprotocontext}
\end{figure}

\paragraph{Model} We denote our feature extractor (i.e. convolutional neural networks + flattening) by $f_{\bd{\theta}}$, parameterized by the weights $\bd{\theta},$ the contextualization mechanism by $s_{\bd{\phi}}$ (we use a self-attention mechanism \citep{vaswani2017attention} in our model), parametrized by the weights $\bd{\phi}$, the predictor function (head) by $p_{\bd{\psi}}$, parametrized by the weights $\bd{\psi},$ and the loss as $L$.


\paragraph{Contextualization for features}
Previous work has theorized that the $f_{\bd{\theta}}$ trained by MAML produces highly generalizable features~\citep{raghu2019rapid}. While this is an appealing property, we believe that performance can be boosted by adapting these generalized features to be specific to the task at hand. If this is done in conjunction with adaptation of the head, one can produce a highly task-specific model even without gradient updates.

We evaluate two forms of feature adaptation (which we call contextualization) for few shot classification. The first (Figure~\ref{imageprotocontext}\emph{a}) uses the weights of the head of the network as contextualization, with the understanding that samples which activate highly with a particular layer of the head are likely to belong to the class that layer of the head connects to. In this case, the contextualization is updated through the gradient descent of the head. We also explore a method of contextualization that allows for the context to be updated in a non-gradient manner completely independent of the head. This contextualization (Figure~\ref{imageprotocontext}\emph{b}) begins with a set of contextualized prototypes (described in more detail below) and continuously updates them through self-attention during each forward pass. Our hypothesis is that with this contextualization, the context will become a non-gradient method of transferring information from the support set to the query set.

Regardless of the form of contextualization we choose, both follow the same forward pass. We describe the algorithm at a high level here, and point the reader to our Supplementary Materials (SM) for the specific equations detailing contextualization. Images go through $f_{\bd{\theta}}$. We concatenate the extracted features with a set of ``contextualizers'' in the same feature space and feed them through $s_{\bd{\phi}}$. This mechanism updates the features, creating a task-dependent feature space. The updated features are then fed through $p_{\bd{\psi}}$ to produce outputs.

\section{Head Initializations}
In this section, we (\emph{i}) conduct a simple study to demonstrate the importance of intitializing the head properly for each task, (\emph{ii}) introduce the use of prototypical representations for head initialization, and (\emph{iii}) in light of (\emph{i, ii}) we marry our concept of contextualization with that of prototypes.

\paragraph{Head initialization is important} In Table~\ref{tab:simple}  we present the \emph{meta test} results for a simple study that aims to understand how important the initialization for the head is. The Regular experiment is MAML. In the second experiment, we train with MAML and randomize the head before the inner loop during meta testing. In the last experiment, during both meta training and testing we start the inner loop with a random initialization for the head to see if the network can learn to overcome the random initialization. We observe significant degradation in accuracy as the meta initialization is lost via randomization. These results suggest that fine-tuning through the inner loop is not enough to produce a strong task-specific head, and thus that initialization matters. We use these insights to motivate our exploration of new ways to initialize the head for each task.

\begin{table}[th]
  \small
  \centering
  \caption{MAML with different initialization schemes for the head of the network. Accuracy in \%. Experiments performed with five inner loop steps. Mean and standard deviation from three trials.}
  \begin{tabular}{lccc}
    \toprule
    \multicolumn{1}{c}{Experiment} &
    \multicolumn{1}{c}{Regular} &
    \multicolumn{1}{c}{Random Head (test only)}
    & \multicolumn{1}{c}{Random Head (train \& test)} \\
    \cmidrule(r){1-1}
    \cmidrule(r){2-2}
    \cmidrule(r){3-3}
    \cmidrule(r){4-4}
    Mini-ImageNet (5-way 1-shot) & $49.16 \pm 0.27$ & $34.89 \pm 1.00$ & $34.96 \pm 0.95$ \\
    Mini-ImageNet (5-way 5-shot) & $66.61 \pm 0.55$ & $45.13 \pm 0.65$ & $59.60 \pm 0.34$ \\
    \bottomrule
  \end{tabular}
  \label{tab:simple}
\end{table}

\paragraph{Prototypes as initializations}
ProtoMAML builds on MAML by bringing the concept of prototypes 
into the MAML training process. Prototypes are vectors meant to be representative of a particular class, and are computed by averaging together the features of support samples for a given class $a$
\begin{align}
    \label{eq:computing_prototypes}
    \mathbf{X}_{(a)} & = \left \{\mathbf{x}_{j}^{(a)}|1\leq j \leq k, y_j = a \right \} \subseteq \mathbf{S} \\
    \label{eq:prototypes}
    \mathbf{p}^{(a)} & = \frac{1}{k} \sum_{\mathbf{x} \in \mathbf{X}_{(a)}} f_{\bd{\theta}}(\mathbf{x}). 
\end{align}
ProtoMAML then initializes the head of the network with weights equal to $2 \mathbf{p}^{(a)}$ and biases equal to $-\left\|\mathbf{p}^{(a)}\right\|^2_2$ for $a=1,\dots n$~\citep{triantafillou2019, snell-etal-2017-prototypical} as follows:
\begin{align}
    \mathbf{H}_w = \left[2\mathbf{p}^{(1)}, \dots ,2\mathbf{p}^{(n)}\right]^\top & & 
    \mathbf{H}_b = \left[-\left\|\mathbf{p}^{(1)}\right\|^2_2, \dots , -\left\|\mathbf{p}^{(n)}\right\|^2_2\right]^\top.
    \label{eq:headinit}
\end{align}

\begin{figure}[t]
\centering

 \includegraphics[width=0.7\textwidth]{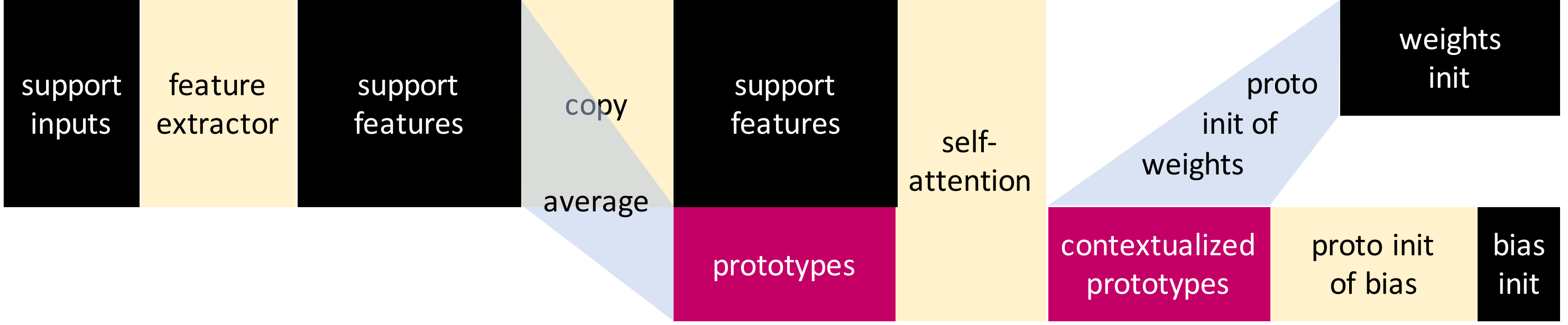}
 \caption{Contextualized prototypes form the initialization $\bd{\psi}$ of the predictor $p_{\bd{\psi}}$.}
 \label{contextinit}
\end{figure}

\paragraph{ProtoContext}

In Figure~\ref{contextinit} we introduce \textit{ProtoContext}:
our method of generalizing prototypes which can leverage ProtoMAML's initialization to produce even stronger, task-specific head initializations. 
 ProtoContext collects prototypes through Equation~\eqref{eq:computing_prototypes} and Equation~\eqref{eq:prototypes},
contextualizing them via self-attention \emph{using the support examples}  as follows:
\begin{align*}
    \mathbf{C} = s_{\bd{\phi}}\left(\mathbf{P}, f_{\bd{\theta}}\left(\mathbf{S}\right)\right),
\end{align*}
where $\mathbf{C}$ is a matrix with the contextualized prototypes for each class $\mathbf{c}^{(a)}$ and $\mathbf{P}$ is a matrix of the prototypes $\mathbf{p}^{(a)}$. Feeding the prototypes through the contextualization mechanism \emph{before} initialization ensures that the head is initialized with information about the task at large, and not just one particular class. We initialize $\mathbf{H}$ in the manner detailed in Equation~\eqref{eq:headinit}, using $\mathbf{c}^{(a)}$ instead of $\mathbf{p}^{(a)}$. 

With this head initialization we then intitalize our contextualization vectors using one of the two methods described above as shown in Figure~\ref{imageprotocontext}. Because the head is initialized using the contextualized prototypes, our initial context vectors are proportional to the contextualized prototypes regardless of the feature contextualization method we use:
\[
\mathbf{C}^{(0)}\propto \left[\mathbf{c}^{(1)},\dots,\mathbf{c}^{(n)}\right]^\top.
\]
We use this method of head initialization regardless of our contextualization for the features.


\paragraph{Why marry contextualization and the head?} 
This last fact, that our initial context vectors are proportional to the contextualized prototypes, is very useful as it provides a mechanism by which our model can leverage self-attention to boost prediction.  
In the beginning of the inner loop we obtain $\widetilde{\mathbf{X}}= s_{\bd{\phi}}(f_{\bd{\theta}}\left(\mathbf{X}),\mathbf{C}^{(0)}\right)$ where $\widetilde{\mathbf{X}} = \left\{ \widetilde{\mathbf{x}}_j\right\}$ is the collection of contextualized features. The contextualization by the self-attention constructs $\widetilde{\mathbf{x}}$ as a linear combination of features and contextualized prototypes: 
\[
\widetilde{\mathbf{x}} \propto \sum_{j}u_jf_{\bd{\theta}}(\mathbf{x}_j)+ \sum_{a}v_a \mathbf{c}^{(a)} +f_{\bd{\theta}}(\mathbf{x}) ,
\]
where $u_j$ and $v_a$ are the weights given by the self-attention. Using Equation~\eqref{eq:headinit} for our head initialization the predicted logit for class $a'$ is given by the $a'$-th component of the output logits as follows:
\begin{equation}
\label{eq:expansion}
\left(\mathbf{H}_w\widetilde{\mathbf{x}} + \mathbf{H}_b\right)_{a'} \propto f_{\bd{\theta}}(\mathbf{x}) \cdot 2\mathbf{c}^{\left(a'\right)} + \sum_{j}2u_j \mathbf{c}^{\left(a'\right)} \cdot f_{\bd{\theta}}(\mathbf{x}_j) + \boxed{\sum_a 2v_a \mathbf{c}^{\left(a'\right)} \cdot \mathbf{c}^{(a)}} -\left\|\mathbf{c}^{\left(a'\right)}\right\|^2_2.
\end{equation}
The boxed term demonstrates why contextualization through self attention can be effective when combined with Equation~\eqref{eq:headinit}. Each coefficient $v_a$ is determined by the self-attention key of $\mathbf{x}$ and the self-attention query of $\mathbf{c}^{(a')}$. Thus, the activation for class $a'$ can be strengthed by placing more weight on the contextualization vector $a'$ in our self-attention. In Section~\ref{sec:analysis} we demonstrate empirical validation of this technique in action by analysing the attention patterns produced by our experiments.

We also believe contextualization may have positive effects on the updates to the network during the inner loop. We present a step towards theoretical analysis of the network's updates with details of this theory in the SM.

\section{Experiments}


In our experiments we probe the extreme scenario of allowing only one gradient step to adapt to $\mathbf{S}$ before classifying $\mathbf{Q}$. 
We test on four few shot learning datasets and compare to the gradient methods of MAML and ProtoMAML as well as a non-gradient based method in Prototypical Networks~\citep{snell-etal-2017-prototypical}.
We test two forms of contextualization, one in which the contextualizer is constantly set to be the head of the model, and the other in which the contextualizer is initialized with the contextualizations of the prototypes and then updated via self-attention independent of the gradient. This method allows the contextualizer to serve as a special component of our model purely for attention, rather than as another trainable parameter like in \citep{zintgraf2018fast}, and keeps the contextualizer unique to the task at hand.

\paragraph{Baselines}
In addition to ProtoMAML, MAML and Prototypical Networks, we also compare our model to two baselines each of which contains one aspect of the improvements we implement in ProtoContext, but neither contains both. The first is a \emph{PCA baseline}, 
which performs PCA on the prototypes and only keeps the principal components with singular values greater than 30\% of the highest singular value (threshold found through grid search).
Prototypes are replaced with the sum of their projections onto these components and the head is then intialized through Equation~\eqref{eq:headinit}. This produces prototypes more specific to the task at hand, thresholding out noise, and is used as 
a comparison to our model's process of producing task-specific prototypes before initialization.

We also test a \emph{Context Only} model which uses our contextualization scheme but does \emph{not} re-initialize the head at every task, instead updating it in the same manner as MAML. This is meant to evaluate the importance of the task-specific feature-adaptation of our contextualizer.


\paragraph{Varying inner loop steps}

Although our primary focus is on task adaptation with a single inner loop step, we also test our models with more inner loop steps to see how they make use of the additional gradient updates.
This is a useful domain to explore, as the fine-tuning time scales linearly with the number of inner loop steps.
When fewer gradient steps are allowed, it is likely that the contextualizer is more involved in the classification of query samples, as the model weights are less adapted to the task. With more inner loop steps, there is risk of overfitting to $\mathbf{S}$, and we seek to understand if our ProtoContext model suffers from this issue.
We experimented with as many as 10 inner loop steps on all datasets, while all of our main results are with 1 step.


\paragraph{Intra-class similarity}
The success of any method that relies on class representation, whether prototypes or contextualizers, is dependent on the ability of $f_{\bd{\theta}}(\cdot)$ to produce representations for each class that align closely to the samples of that class in the latent space. If this is not the case, then it will be difficult to classify new samples based off of the prototype alone. To measure the quality of this clustering, we propose a new metric as follows
\begin{align*}
    \text{intra-class similarity}(a) \colonequals \frac{1}{k} \sum_{j=1}^k \cos^{-1}\left(\frac{f_{\bd{\theta}}\left(\mathbf{x}_{j}^{(a)}\right)}{\left\|f_{\bd{\theta}}\left(\mathbf{x}_{j}^{(a)}\right)\right\|_2}\cdot \frac{\mathbf{p}^{(a)}}{\left\|\mathbf{p}^{(a)}\right\|_2}\right).
\end{align*}

We use the measure of angular similarity rather that Euclidean distance as in \citep{snell-etal-2017-prototypical} because logits are calculated through dot product with a head initialized to the prototypes, not through distance in Euclidean space. We measure the intra-class similarity for the features produced by our ProtoContext model and ProtoMAML before fine-tuning on $\mathbf{S}$. For fair comparison, we only measure the features produced by $f_{\bd{\theta}}$ and do NOT measure the features modified by the context in ProtoContext. Evaluating this measure allows us to understand if the addition of a contextualizer produces feature extractors that create stronger latent representations.

\begin{table}[t]
  \small
  \caption{Accuracy for 1- and 5-shot experiments. Omniglot is 20-way and the rest are 5-way. Median and standard deviation from three trials (five trials for experiments with large deviation). Below the penultimate lines for each experiment are our best models: ProtoContext with Contextualized Prototypes and Head.}
  \centering
  \begin{tabular}{lllll}
    \toprule
    \multicolumn{5}{c}{Accuracy for 1-shot (\%)} \\
    \midrule
    \multicolumn{1}{c}{Model}
    & \multicolumn{1}{c}{Omniglot} 
    & \multicolumn{1}{c}{Mini-ImageNet} & \multicolumn{1}{c}{Tiered-ImageNet}
    & \multicolumn{1}{c}{Airplanes}  \\
    \cmidrule(r){1-1}
    \cmidrule(r){2-2}
    \cmidrule(r){3-3}
    \cmidrule(r){4-4}
    \cmidrule(r){5-5} 
    Prototypical Networks & $80.9\pm 4.00$ & $49.9\pm 0.25$ & $50.3\pm 0.38$ & $42.8\pm 2.76$ \\
    MAML++ & $75.3\pm 2.19$ & $48.5\pm 2.95 $ & $49.2\pm 0.66$ & $39.8\pm 8.50$ \\
    ProtoMAML++ & $70.2 \pm 5.08$ & $47.8\pm 0.25$ & $49.7\pm 0.61$ & $53.1\pm 0.96$ \\
    PCA baseline & $69.4 \pm 4.27$ & $48.5 \pm 0.30$ & $50.0 \pm 0.38$ & $39.5 \pm 1.54$ \\
    Context Only (\emph{Contex. Prototypes}) & $94.9 \pm 0.65$ & $45.4 \pm 2.19$ & $47.4 \pm 5.05$ & $41.3 \pm 2.56$ \\
    Context Only (\emph{Head}) & $94.9 \pm 0.12$ & $49.1 \pm 0.77$ & $50.3 \pm 0.36$ & $52.6 \pm 0.15$ \\
    \midrule
    ProtoContext (\emph{Contex. Prototypes}) & $96.0 \pm 0.40$ & 
    $\textbf{50.1} \pm 1.13$ & 
    $52.1 \pm 1.07$ & $\textbf{58.3} \pm 0.14$ \\
    ProtoContext (\emph{Head}) & $\textbf{96.7} \pm 0.22$ & $\textbf{50.1} \pm 0.21$ & $\textbf{54.3} \pm 0.32$ & 
    $56.6 \pm 0.19$ \\
    \midrule
    \multicolumn{5}{c}{Accuracy for 5-shot (\%)} \\
    \midrule
    \multicolumn{1}{c}{Model}
    & \multicolumn{1}{c}{Omniglot} 
    & \multicolumn{1}{c}{Mini-ImageNet} & \multicolumn{1}{c}{Tiered-ImageNet}
    & \multicolumn{1}{c}{Airplanes}  \\
    \cmidrule(r){1-1}
    \cmidrule(r){2-2}
    \cmidrule(r){3-3}
    \cmidrule(r){4-4}
    \cmidrule(r){5-5}
    Prototypical Networks & $91.5\pm 2.89$ & $\textbf{66.4}\pm 0.40$ & $70.1\pm 0.40$ & $63.0\pm 0.86$ \\
    MAML++ & $84.6\pm 2.40$ & $65.6\pm 0.39$ & $57.3\pm 6.87$ & $58.2\pm 9.73$ \\
    ProtoMAML++ & $77.5\pm 4.78$ & $60.7\pm 0.99$ & $65.0\pm 0.31$ & $63.3\pm 0.57$ \\
    PCA baseline & $79.1 \pm 4.06$ & $61.5 \pm 0.13$ & $65.8 \pm 0.25$ & $62.2 \pm 2.06$ \\
     Context Only (\emph{Context. Prototypes}) & $98.1 \pm 0.22$ & $60.0 \pm 0.30$ & $59.8 \pm 0.21$ & $62.7 \pm 4.39$ \\
    Context Only \emph{(Head)} & $98.3 \pm 0.08$ & $63.0 \pm 0.12$ & $66.8 \pm 0.28$ & $63.0 \pm 3.07$ \\
    \midrule
     ProtoContext (\emph{Contex. Prototypes}) & $\textbf{98.4} \pm 0.15$ & $63.9 \pm 0.17$ & $70.0 \pm 0.19$ & $\textbf{70.1} \pm 0.61$ \\
    ProtoContext (\emph{Head}) & $\textbf{98.4} \pm 0.04$ & $64.5 \pm 0.39$ & $\textbf{71.4} \pm 0.32$ & $69.3 \pm 0.32$ \\
    \bottomrule
  \end{tabular}
  \label{1shot}
\end{table}


\section{Results}


\paragraph{1-shot and 5-shot classification}
In Table~\ref{1shot} we present our results on 1-shot and 5-shot classifications. In 1-shot classification, we outperform the baselines 
across all datasets. The 1-shot setting is challenging for gradient-based meta learners as there are not many data samples to use for adaptation and overfitting is a high risk. It becomes very useful in such a setting to have a method of augmenting prediction such as the one we introduce through contextualization. 
In the 5-shot setting, we outperform on Omniglot, Tiered-ImageNet, and Airplanes. On Mini-ImageNet all gradient based models are outperformed by Prototypical Networks. 
In order to understand the limited performance of our model on this dataset, we present figures of the additional inner loop steps in the SM which suggest that ProtoContext is unable to develop a sufficiently strong initialization in this setting. We should note that our models outperform ProtoMAML and the PCA baseline, which also rely on the head initialization.

The two forms of contextualization we explore perform roughly similarly on Omniglot and Mini-ImageNet. On Tiered-ImageNet, using the head as contextualization leads to considerable benefits, while on Airplanes, using contextualized prototypes produces better results. 
We also see that in general, the Context Only baseline is more successful when given the head as contextualization. This makes sense, since although there is no explicit initialization of the head in this model, contextualization allows us to augment features with information about the head. Thus, features of samples in a given class can be made to align strongly with the weights in the head for that class. A surprising benefit of our contextualization method is the low variance across runs. Many methods have large standard deviations in the 1 and 5 shot settings (up to $9.73\%$ for MAML). In contrast, across all datasets our ProtoContext model has low variance in its test accuracy. 

\begin{figure}[t]
\centering

\begin{minipage}[b]{0.45\linewidth}
\includegraphics[scale=0.40]{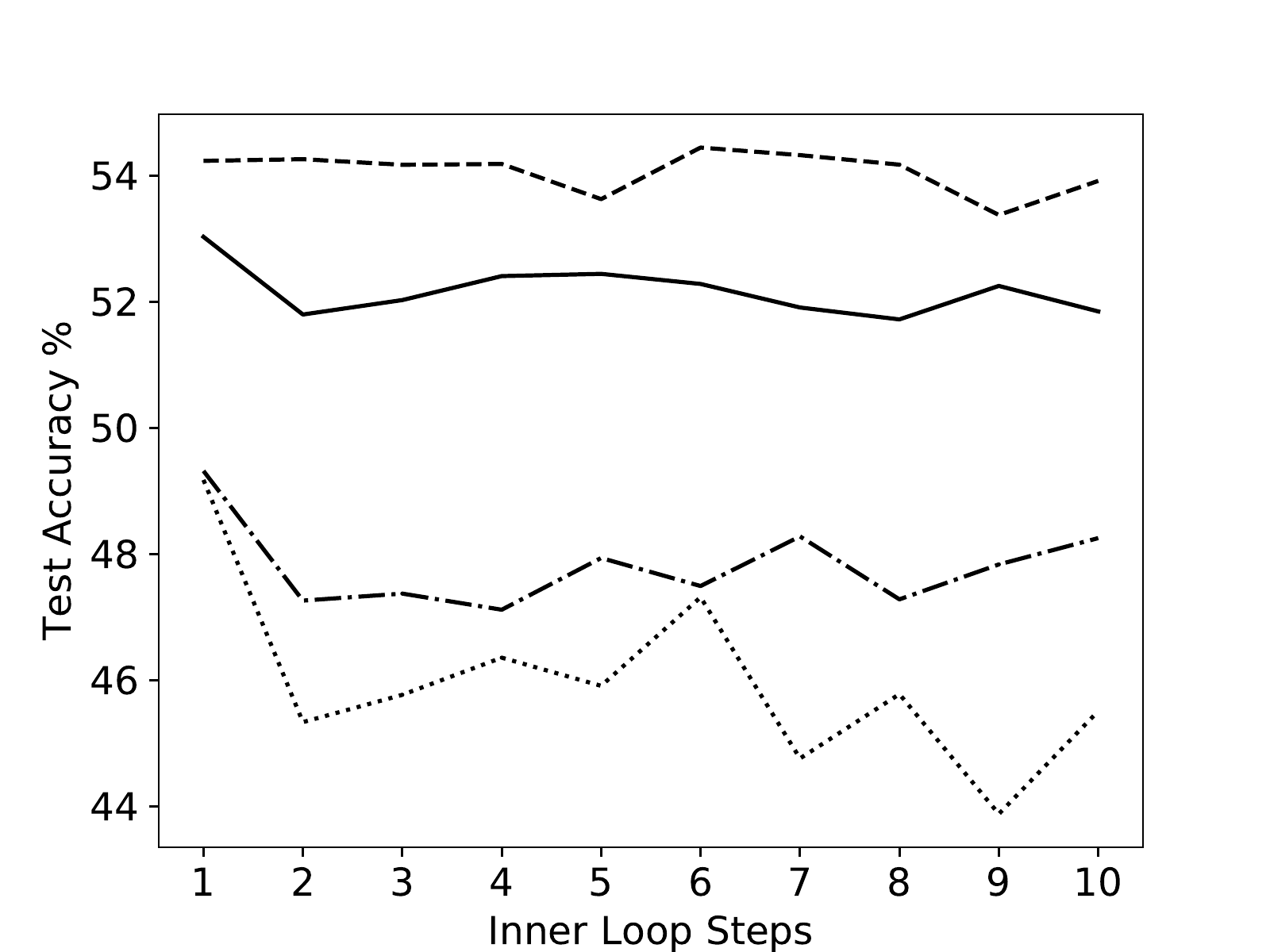}

\end{minipage}
\quad
\begin{minipage}[b]{0.40\linewidth}
  \includegraphics[scale=0.40]{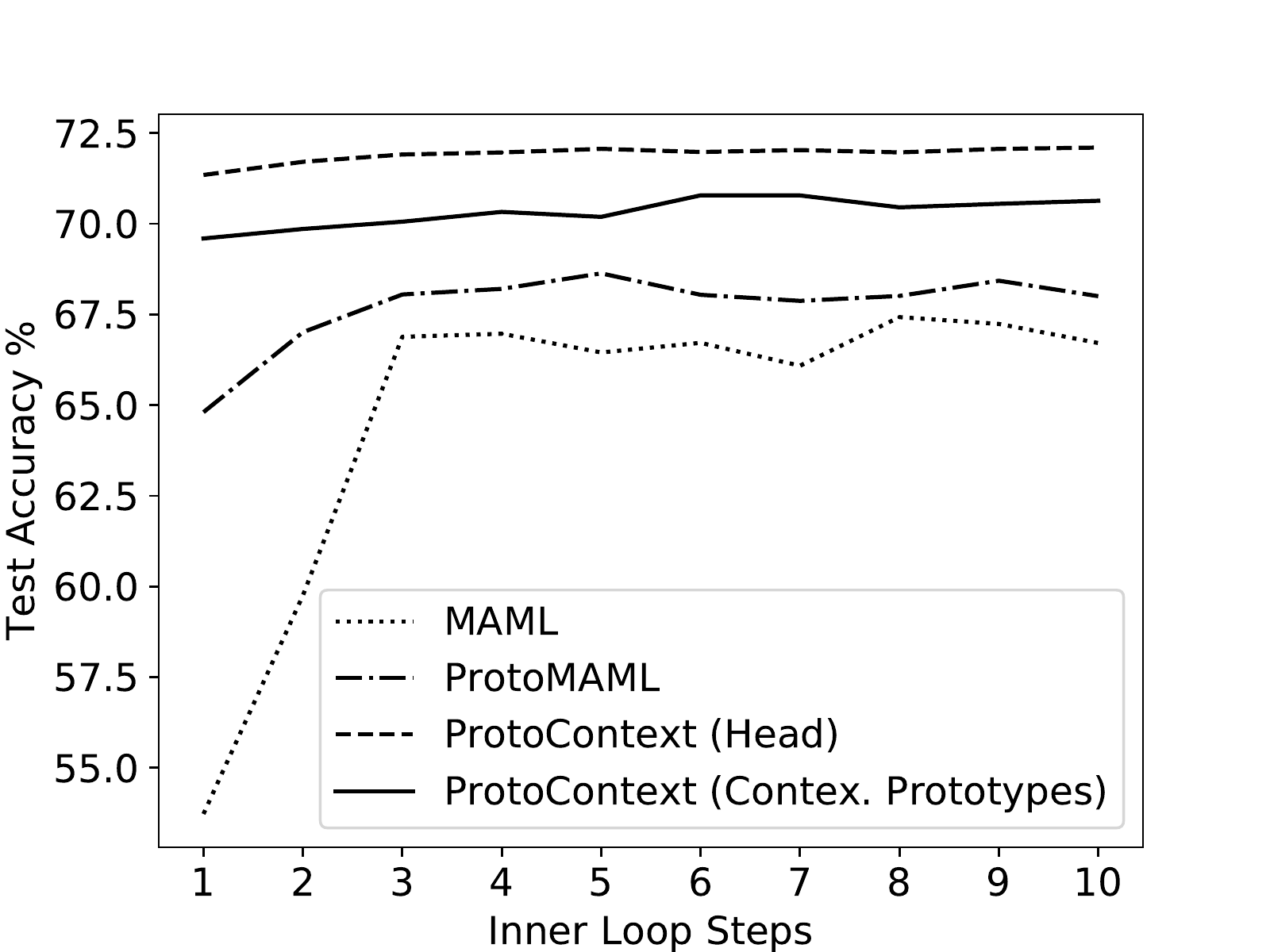}

\end{minipage}
\caption{Test Accuracy versus the number of inner loop steps for 1-shot (left) and 5-shot (right) on Tiered-ImageNet. ProtoContext shows strong results with just one adaptation step. In addition, ProtoContext seems to suffer from significantly less overfitting than ProtoMAML or MAML in the 1-shot setting.}
\label{InnerLoop}

\end{figure}

\paragraph{Inner loop steps}
In Figure~\ref{InnerLoop} we demonstrate our results from varying the number of inner loop steps the model is allowed before testing on the query set on Tiered-ImageNet. We see that in both the 1-shot and the 5-shot settings, our model both begins at and stays at a high level of accuracy. These results indicate that ProtoContext is able to effectively develop a very strong initialization for the head that requires little task-specific fine-tuning. In the 1-shot setting the accuracy of MAML and ProtoMAML decreases as we add more inner loop steps, which may be due to overfitting. While both versions of ProtoContext do suffer from this to a degree, the losses in accuracy are not nearly as dramatic. This suggests that the use of contextualizers may provide a method of augmenting prediction that is not as prone to overfitting as fine-tuning parameters through gradient descent, or that the initializations created by ProtoContext produce heads that are more general than those produced by ProtoMAML and MAML. Furthermore, a practical benefit of relying on a single step of inner loop adaptation is that it allows for a lower training budget for few shot learning.

\section{Analysis}
\label{sec:analysis}

   
    

In this section our goal is to understand the role of contextualizers in conjuction with (\emph{i}) the feature extactor and (\emph{ii}) the self-attention mechanism. Hence, we develop two approaches to observe the role of contextualizers, which we present below.

\paragraph{Intra-class similarity}
Table~\ref{IntraClassSim} shows the results of measuring the intra-class similarity of the features extracted by our ProtoContext model as well as ProtoMAML. Across every dataset, we see that ProtoContext consistently produces features with a lower intra-class similarity, indicating that it has learned a feature extractor that is discriminative. These numbers suggest that the use of our contextualization mechanism may have some upstream benefit to the feature extractor. We can understand this by realizing that, because our contextualizers are initialized with versions of the prototypes, properly making use of the contextualization mechanism requires contextualizers that well represent the samples given. 

\begin{table}[b]
  \small
  \caption{Intra-class similarity is noticeably smaller for ProtoContext than ProtoMAML. Experiments are 5-shot. All measures in radians. Mean and standard deviation from three trials.}
  \centering
  \begin{tabular}{lccc}
    \toprule
    \multicolumn{1}{c}{Dataset}
    & \multicolumn{1}{c}{ProtoContext  (\emph{Contex. Prototypes})}
    & \multicolumn{1}{c}{ProtoContext (\emph{Head})} 
    & \multicolumn{1}{c}{ProtoMAML}  \\
    \cmidrule(r){1-1}
    \cmidrule(r){2-2}
    \cmidrule(r){3-3}
    \cmidrule(r){4-4}
   
    Omniglot & $0.37\pm 0.006$ & $0.28 \pm 0.001$ & $0.84 \pm 0.011$  \\
    Mini-ImageNet &  $0.99 \pm 0.001$ & $0.82 \pm 0.003$ & $1.24 \pm 0.006$  \\
    Tiered-ImageNet & $0.98 \pm 0.007$ & $0.83 \pm 0.002$ & $1.22 \pm 0.003$  \\
    Airplanes & $0.93 \pm 0.128$ & $0.70 \pm 0.005$ & $1.15 \pm 0.019$  \\
    
    \bottomrule
  \end{tabular}
  \label{IntraClassSim}
\end{table}

\paragraph{Use of contextualizers in self-attention}
Figure~\ref{attn weights} shows heatmaps of the attention weights in the self-attention mechanism of our ProtoContext model. The diagonal patterns of attention indicate high weight is given to the contextualizers that correspond to the class of a given sample. The context is also highly updated by the samples in its class, although this is more prominent in the 1-shot than the 5-shot setting. We present additional heatmaps in the SM that show a range of patterns can emerge, and show how applying regularization on the heatmaps can lead to improvements.  We also find that the query almost never attends the query during self-attention. This is especially important as this is a fully learned behavior, not one we code into the model. We believe this demonstrates that the self-attention mechanism is learning what we expect: that context is a much more useful tool than other samples for making classification decisions.

\begin{figure}[t]
    \centering
    
\begin{minipage}[t]{0.45\linewidth}
\includegraphics[scale=0.25]{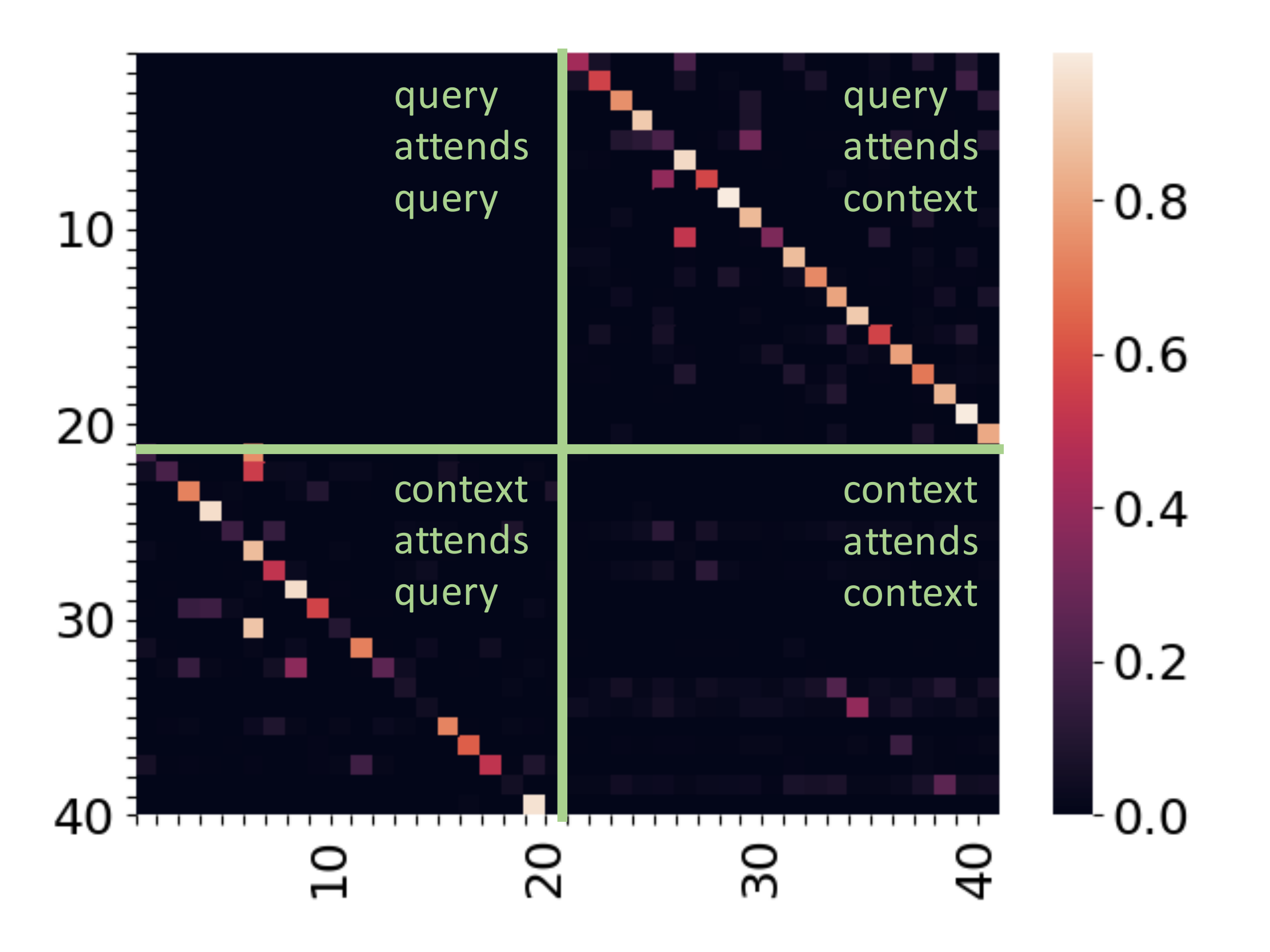}

\end{minipage}
\quad
\begin{minipage}[t]{0.45\linewidth}
  \includegraphics[scale=0.25]{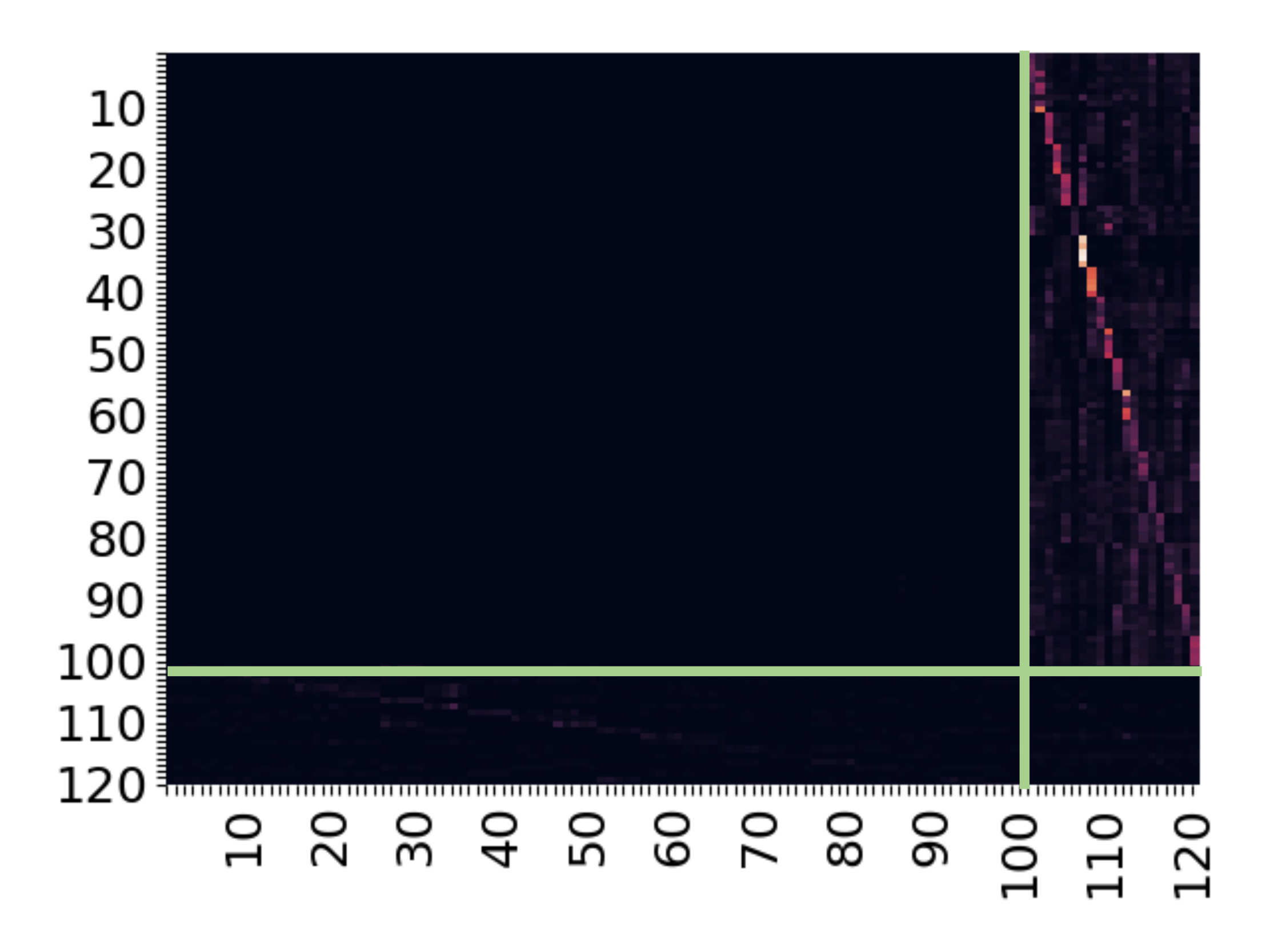}
\end{minipage}
\label{attn weights}
\caption{Heatmaps of the attention weights for Omniglot provide evidence to support our conjecture in Equation~\eqref{eq:expansion}: significant weight is given to the contextualizer corresponding to each example's class when contextualizing the features. This is seen in the strong diagonals in the heatmaps, which are a result of samples being presented in the order of class number.} 
\end{figure}

\section{Related Work}


\paragraph{Few shot learning}
Few shot learning is, broadly, the task of making inference with use of few labeled examples \citep{Li_2019, Lifchitz_2019}. To solve this fairly formidable problem, many fields have been explored including meta learning \citep{Sun_2019, li2017metasgd, DBLP:journals/corr/abs-1803-00676, finn2017}, metric learning \citep{Snell2017PrototypicalNF, vinyals2016, cheng2019fewshot, Karlinsky_2019_CVPR}, as well as more broad approaches \citep{ravi2017, InfoRetreivalTriantafillou2017, garcia2017fewshot, Lee_2019_CVPR}. Closest to our work are methods that approach few shot learning through forms of contextualization \citep{Changpinyo_2017, oreshkin2018tadam, Gidaris_2018}. We were particularly encouraged by the results of ~\citet{ye2018fewshot}, who found that attention mechanisms are a very powerful set-to-set function for few shot learning.  However, no approach that we are aware of makes use of attention not only to construct task-specific initializations, but also to modify features in a meta-learning model.

\paragraph{Meta learning}
Meta learning algorithms create models that quickly adapt to new tasks \citep{yin2019metalearning}. The goals of meta learning fit those of few shot learning very well, and many meta learning ideas have been successfully applied to few shot learning \citep{Jamal_2019_CVPR, BayseianMAML, gordon2018metalearning}. Approaches vary, but can broadly be split into those that focus on using architectures or learned gradient updates \citep{santoro2016, xu2018metagradient, wichrowska2017learned, kordik2010meta, flennerhag2019metalearning} and those that make use of learned initializations which can quickly adapt to new tasks through gradient descent \citep{finn2017, nichol2018firstorder, riemer2018learning, MetaInitDauphin2019, finn2017metalearning, frans2017meta}. The latter approach is much closer to our work. We take this approach one step further and construct \emph{task-specific} initializations before fine-tuning occurs along the lines of~ \citet{triantafillou2019}. In addition, we produce a task-specific feature space through our contextualizers that is somewhat similiar to the ideas presented in~\citep{zintgraf2018fast, perez2017film}. Our mechanism is very different, however, using self-attention to learn our feature space. Furthermore, none of these approaches explore both empirically and theoretically the benefits provided by contextualization in very challenging environments, such as allowing only 1 gradient step for adaptation.

\paragraph{Metric learning}
Deep metric learning produces algorithms and models which are able to construct ``metrics'' by which images can be compared or retrieved. Much work focuses on loss functions in metric learning \citep{Ge_2018, Song_2016, sohn2016improved, movshovitz2017no, Wang_2017}, and some of this work has led to methods useful for few shot learning \citep{roweis2004neighbourhood, snell-etal-2017-prototypical, salakhutdinov2007learning, snell-etal-2017-prototypical}. Metric learning has also been used in one-shot learning \citep{koch2015siamese, vinyals2016}. This work is useful in comparison to our gradient-based methods; however, we seek to extend it by using strong metrics for initialization of gradient learners (i.e. Equation~\eqref{eq:headinit}).

\section{Conclusion and Future Work}
We have presented a novel framework---Contextualization---for approaching few shot classification which allows for task-specific initialization and feature modification. We tested two forms of contextualization and showed that both outperform strong baselines. In addition, we presented other benefits of contextualization including resilience to overfitting, potential upstream benefits for feature extractors, and use of context to boost classification decisions. We believe that with proper contextualizers, contextualization can be extended to other tasks such as reinforcement learning and regression, and we hope to explore this in future work.

\section*{Acknowledgements}

We would like to thank Peter Lu, Charlotte Loh, Ileana Rugina, Kristian Georgiev, Brian Chuang, Alireza Fallah, Séb Arnold, and Chelsea Finn for fruitful conversations.

Research was sponsored in part by the United States Air Force Research Laboratory and was accomplished under Cooperative Agreement Number FA8750-19-2-1000. This material is also based upon work supported in part by the U. S. Army Research Office through the Institute for Soldier Nanotechnologies at MIT, under Collaborative Agreement Number W911NF-18-2-0048. The views and conclusions contained in this document are those of the authors and should not be interpreted as representing the official policies, either expressed or implied, of the United States Air Force or the U.S. Government. The U.S. Government is authorized to reproduce and distribute reprints for Government purposes notwithstanding any copyright notation herein.  

\bibliography{neurips_2020_refs}
\bibliographystyle{unsrtnat}

\newpage
\appendix
\begin{center}
\textbf{\huge Supplementary Materials}
\end{center}

We organize the Supplemental Materials into three categories: {(\emph{i}) Formalism and Theory}; (\emph{ii}) Empirical Analysis and (\emph{iii}) Experiments. 

\emph{Formalism and Theory}: In Section~\ref{sec:details} we present details of our contextualization algorithm. In Section~\ref{sec:updates} we compute explicit forms of gradient updates under contextualization and analyze the expressions. 

\emph{Empirical Analysis}: In Section \ref{sec:heatmaps} we present a method of constraining our attention mechanism to better focus on the contextualizers. 

\emph{Experiments}: In Section~\ref{sec:datasets} we describe the datasets used for experiments. In Section~\ref{sec:xavier} we present a modification of the prototypical initialization used in ProtoMAML, which we introduced in order to secure stable training. In Section~\ref{sec:training} we discuss training specifications. In Section~\ref{sec:quickdraw} we present additional results on the Quickdraw dataset. In Section~\ref{sec:innerloop} we present plots of additional inner loop steps for more datasets.  

\section{Details of the Contextualization Algorithm}
\label{sec:details}

\begin{figure*}
  \centering
  \includegraphics[width=\textwidth]{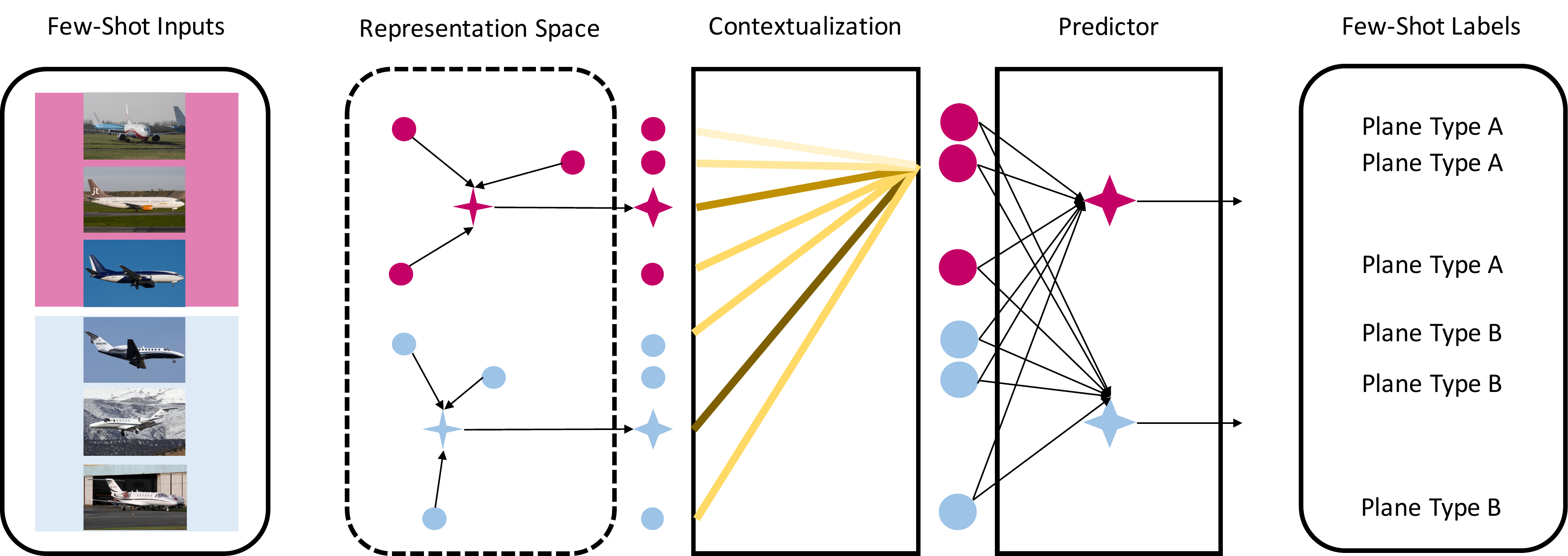}
  \caption{Rough sketch of our ProtoContext method. Increased thickness indicates updated representations. Horizontal lines in the Representation Space indicate a parametrized transformation of the prototypes. Yellow color indicates the strength of the attention connections for a particular representation. The predictor compares the contextualized input representations to the contextualized prototypes to make a prediction. Bias in the predictor is ignored for simplicity.}
  \label{fig:sketch}
\end{figure*}

In this section, we describe in greater detail the algorithm by which our model implements contextualization in the inner loop. Figure~\ref{fig:sketch} serves as an intuitive sketch of ProtoContext working on a few-shot task with two classes, Plane Type A and B. 

Below we proceed with the formalization of our algorithm. We note that $(\mathbf{X}, \mathbf{y})$ is the support set, $(\mathbf{X'}, \mathbf{y'})$ is the query set, and $\mathbf{C}$ are our contextualizers.
\paragraph{Contextualization algorithm}
The inner loop, specified by the weights $(\bd{\theta},\bd{\phi},\bd{\psi})$, is defined recursively for $i=1,\dots,M$ (where $M$ is the number of inner loop steps) by setting 
 $\left(\bd{\theta}^{(0)},\bd{\phi}^{(0)},\bd{\psi}^{(0)}\right) \colonequals \left(\bd{\theta},\bd{\phi},\bd{\psi} \right)$ and
the forward pass as follows
\begin{align*}
\widetilde{\mathbf{X}}^{(i)},\mathbf{C}^{(i)} & = s_{\bd{\phi}^{(i-1)}}\left( f_{\bd{\theta}^{(i-1)}}(\mathbf{X}), \mathbf{C}^{(i-1)}\right) & 
\hat{\mathbf{y}}^{(i)} & = p_{\bd{\psi}^{(i-1)}}\left(\widetilde{\mathbf{X}}^{(i)}\right)
\end{align*}
and a gradient step with step size $\alpha$ yields $\left(\bd{\theta}^{(i)},\bd{\phi}^{(i)},\bd{\psi}^{(i)}\right)$ respectively expressed as 
\begin{align*}
 \bd{\theta}^{(i)} & = \bd{\theta}^{(i-1)} - \alpha \cdot \nabla_{\bd{\theta}^{(i-1)}} L\left(\hat{\mathbf{y}}^{(i)},\mathbf{y}\right) \\ \bd{\phi}^{(i)} & = \bd{\phi}^{(i-1)} - \alpha \cdot \nabla_{\bd{\phi}^{(i-1)}} L\left(\hat{\mathbf{y}}^{(i)},\mathbf{y}\right) \\
 \bd{\psi}^{(i)} & = \bd{\psi}^{(i-1)} - \alpha \cdot \nabla_{\bd{\psi}^{(i-1)}} L\left(\hat{\mathbf{y}}^{(i)},\mathbf{y}\right), 
\end{align*}
where for simplicity we omit the dependence of $\hat{\mathbf{y}}^{(i)}$ on $(\bd{\theta},\bd{\phi},\bd{\psi})$. Now, our forward pass on $\mathbf{Q}$ is
\begin{align*}
\widetilde{\mathbf{X}}, \mathbf{C} & = s_{\bd{\phi}^{(M)}}\left( f_{\bd{\theta}^{(M)}}\left(\mathbf{X}'\right), \mathbf{C}^{(M)}\right) & 
\hat{\mathbf{y}} & = p_{\bd{\psi}^{(M)}}\left(\widetilde{\mathbf{X}}\right)
\end{align*}
and our weights are updated in the outer loop using normal gradient descent:
\begin{align*}
\bd{\theta} & = \bd{\theta} - \alpha \cdot \nabla_{\bd{\theta}^{(M)}} L\left(\hat{\mathbf{y}},\mathbf{y'}\right) & 
\bd{\phi} & = \bd{\phi} - \alpha \cdot \nabla_{\bd{\phi}^{(M)}} L\left(\hat{\mathbf{y}},\mathbf{y'}\right) &
\bd{\psi} & = \bd{\psi} - \alpha \cdot \nabla_{\bd{\psi}^{(M)}} L\left(\hat{\mathbf{y}},\mathbf{y'}\right). 
\end{align*}
We make special note of the fact that \textbf{no gradient passes through the contextualizers}, as this not only greatly slows down computation of the gradient, but also can cause exploding gradients with multiple sets. We also note that in the case of the Head ProtoContext model, each $\mathbf{C}^{(i)}$ is replaced with the Head of the model at that step.

\section{Gradient Updates}
\label{sec:updates}
In this section, we show through analysis of our gradient the impact the addition of a self attention mechanism can have on our parameter updates. Our analysis is focused on the 1-shot setting for the sake of simplicity of notation. All of the calculations extend to the 5-shot case naturally.
\subsection{Notation for the 1-shot setting}
Note that the inner loop updates depend only on the support set $\mathbf{S}$, and since in the 1-shot setting we have a single example from each class $a$, we can write the support set as follows $\mathbf{S}=\left(\mathbf{X},\mathbf{y}\right)\equiv \left\{ \left(\mathbf{x}^{(a)}, a\right)\right\}_{a=1}^{n}$, where $a=y^{(a)}$ without loss of generality, i.e. the target for the input $\mathbf{x}^{(a)}$ is the index of its class, which is $a$. Likewise, we denote the context as $\mathbf{C} \equiv \left\{ \mathbf{c}^{(a)}\right\}_{a=1}^n.$ Finally, let $\mathcal{C}^{(a)}$ be the contextualization of input example $\mathbf{x}^{(a)},$ i.e.
$\mathcal{C}^{(a)}=s_{\bd{\phi}}\left(f_{\bd{\theta}}\left(\mathbf{x}^{(a)}\right),\mathbf{C}\right)$ for the contextualization algorithm and
$\mathcal{C}^{(a)}=f_{\bd{\theta}}\left(\mathbf{x}^{(a)}\right)$ for any other gradient based algorithm: MAML, ProtoMAML, etc.
\subsection{General form of the loss and the gradient updates}
Our approach is to impose the structure of the classification head in order to obtain an explicit form of the loss function, which will consequentially yield the gradient updates for the parameters of our model. We proceed with our analysis below.

Let the weights of the head be $\left\{\bd{\psi}^{\left(a''\right)}\right\}_{a''=1}^n$ and their corresponding biases be $\left\{b^{\left(a''\right)}\right\}_{a''=1}^n.$ Then, we have that for a single sample
\begin{equation}
\label{eq:output_of_network}
p_{\bd{\psi}}\left(s_{\bd{\phi}}\left(f_{\bd{\theta}}\left(\mathbf{x}^{(a)}\right)\right)\right) = \mathrm{softmax} \left( \begin{bmatrix}
\bd{\psi}^{(1)} \cdot \mathcal{C}^{(a)}  + b^{(1)} \\
\vdots\\
\bd{\psi}^{(n)} \cdot \mathcal{C}^{(a)}+ b^{(n)}
\end{bmatrix} \right).
\end{equation}
Note that in our setting the cross entropy loss takes the form 
\begin{equation}
\label{eq:loss_form}
-\sum_{a''=1}^n \log \left[p_{\bd{\psi}}\left(s_{\bd{\phi}}\left(f_{\bd{\theta}}\left(\mathbf{x}^{\left(a''\right)}\right)\right)\right)\right]_{a''},
\end{equation}
where $[\mathbf{z}]_{a''}$ means that we take the $a''$-th component of the vector $\mathbf{z}$, and $p_{\bd{\psi}}, s_{\bd{\phi}},$ and $f_{\bd{\theta}}$ are the predictor (head), contextualization mechanism, and feature extractor respectively as defined above.  Now, using Equation~\eqref{eq:output_of_network} in
Equation~\eqref{eq:loss_form}, then using the form of the softmax and simplifying the expression we obtain the following loss function $L$ viewed as a function of the support set $\mathbf{S}$ as follows

\begin{equation}
\label{eq:loss}
L(\mathbf{S})=
-\sum_{a''=1}^n \bd{\psi}^{\left(a''\right)}\cdot \mathcal{C}^{\left(a''\right)}   + b^{\left(a''\right)} 
 + \sum_{a''=1}^n \log \left( \sum_{\widetilde{a}=1}^n \exp\left( \bd{\psi}^{\left(\widetilde{a}\right)}\cdot \mathcal{C}^{\left(a''\right)}  + b^{\left(\widetilde{a}\right)} \right) \right). 
\end{equation}

Hence, the form of this loss in Equation~\eqref{eq:loss} is amenable to analysis for the predictor and feature extractor of our model. We proceed in this order below.
\subsection{Gradient Updates for the Predictor}
Differentiating Equation~\eqref{eq:loss} with respect to the head weights $\bd{\psi}^{(a)}$, corresponding to class $a$, we obtain the following
\begin{equation}
\label{eq:headform}
\nabla_{\bd{\psi}^{(a)}}L(\mathbf{S}) = -\mathcal{C}^{(a)} + \mathcal{D}^{(a)},
\end{equation}
where we introduced the following notation
\begin{equation*}
\label{eq:term_D}
\mathcal{D}^{(a)} = 
\sum_{a''=1}^n \frac{\exp\left( \bd{\psi}^{\left(a\right)}\cdot \mathcal{C}^{\left(a''\right)} + b^{\left(a\right)} \right)}{\sum_{\widetilde{a}=1}^n \exp\left( \bd{\psi}^{\left(\widetilde{a}\right)}\cdot \mathcal{C}^{\left(a''\right)} + b^{\left(\widetilde{a}\right)} \right)} \mathcal{C}^{\left(a''\right)} \equiv 
\sum_{a''=1}^{n}\frac{\exp\left( \bd{\psi}^{\left(a\right)}\cdot \mathcal{C}^{\left(a''\right)} + b^{\left(a\right)}\right)}{\mathcal{Z}}\mathcal{C}^{\left(a''\right)},
\end{equation*}
where $\mathcal{Z}$ is the partition function. Equation~\eqref{eq:headform} is significant since it tells us that the contextualization algorithm can control the gradient updates  
through the self-attention mechanism, because the contextualization $\mathcal{C}^{(a)}$ depends on the parameters $\bd{\phi}$ of the self-attention mechanism.
Under some assumptions, this behavior might yield simplifications, amenable to analysis. In that spirit, we proceed with the following
\begin{prop}
Assume our contextualizations are orthogonal. In the 1-shot setting, the inner loop updates for the weights in the head for class $a$ move in a direction of \textbf{positive} correlation with the contextualization of its support example $\mathbf{x}^{(a)}.$ 
\end{prop}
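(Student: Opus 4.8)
The plan is to read the inner-loop update direction directly off Equation~\eqref{eq:headform} and then show that its inner product with the contextualization $\mathcal{C}^{(a)}$ is strictly positive. Since the inner loop performs $\bd{\psi}^{(a)} \mapsto \bd{\psi}^{(a)} - \alpha \nabla_{\bd{\psi}^{(a)}} L(\mathbf{S})$, the direction of motion is the negative gradient, which by Equation~\eqref{eq:headform} equals $\mathcal{C}^{(a)} - \mathcal{D}^{(a)}$. Interpreting ``moves in a direction of positive correlation with $\mathcal{C}^{(a)}$'' as the statement that $\left(\mathcal{C}^{(a)} - \mathcal{D}^{(a)}\right)\cdot \mathcal{C}^{(a)} > 0$, the whole argument reduces to estimating this single dot product.

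First I would expand the dot product as $\|\mathcal{C}^{(a)}\|_2^2 - \mathcal{D}^{(a)}\cdot\mathcal{C}^{(a)}$ and substitute the definition of $\mathcal{D}^{(a)}$. The key simplification comes from the orthogonality hypothesis: writing $\mathcal{D}^{(a)} = \sum_{a''} P(a\mid a'')\, \mathcal{C}^{(a'')}$, where $P(a\mid a'') = \exp\!\bigl(\bd{\psi}^{(a)}\cdot\mathcal{C}^{(a'')} + b^{(a)}\bigr)/\mathcal{Z}^{(a'')}$ is the softmax probability assigned to class $a$ on input $\mathbf{x}^{(a'')}$, orthogonality forces every cross term $\mathcal{C}^{(a'')}\cdot\mathcal{C}^{(a)}$ with $a''\neq a$ to vanish. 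Hence only the $a''=a$ term survives, giving $\mathcal{D}^{(a)}\cdot\mathcal{C}^{(a)} = P(a\mid a)\,\|\mathcal{C}^{(a)}\|_2^2$.

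Then I would combine the two pieces to obtain $\left(\mathcal{C}^{(a)} - \mathcal{D}^{(a)}\right)\cdot\mathcal{C}^{(a)} = \bigl(1 - P(a\mid a)\bigr)\|\mathcal{C}^{(a)}\|_2^2$. Because $P(a\mid a)$ is a single softmax coordinate over $n \ge 2$ classes with finite logits, it lies strictly in $(0,1)$, so $1 - P(a\mid a) > 0$; and $\|\mathcal{C}^{(a)}\|_2^2 > 0$ whenever the contextualization is nonzero. The product of two strictly positive quantities is strictly positive, which is precisely the claim.

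The main obstacle is less a calculation than getting the bookkeeping and the hypotheses right. I must be careful that the partition function in the definition of $\mathcal{D}^{(a)}$ depends on the summation index $a''$, so that the coefficients $P(a\mid a'')$ are genuine per-example softmax probabilities, each in $(0,1)$; and I should state the mild nondegeneracy assumptions---finite logits and $\mathcal{C}^{(a)}\neq \mathbf{0}$---under which strict positivity holds. It is also worth flagging that the orthogonality hypothesis is doing all the work: without it, the cross terms $P(a\mid a'')\,\mathcal{C}^{(a'')}\cdot\mathcal{C}^{(a)}$ need not cancel and the sign of the correlation could in principle flip.
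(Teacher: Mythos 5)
Your proposal is correct and follows essentially the same route as the paper's own proof: both expand $\mathcal{C}^{(a)}\cdot\left(\mathcal{C}^{(a)}-\mathcal{D}^{(a)}\right)$, use orthogonality to eliminate the cross terms in $\mathcal{D}^{(a)}$, and conclude strict positivity because the surviving softmax coefficient lies strictly in $(0,1)$. Your extra care about the per-example partition function and the nondegeneracy assumptions ($\mathcal{C}^{(a)}\neq\mathbf{0}$, finite logits) is a minor sharpening of points the paper leaves implicit.
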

\begin{proof}
Note that 
$0< \exp\left( \bd{\psi}^{\left(a\right)}\cdot \mathcal{C}^{\left(a\right)} + b^{\left(a\right)} \right)/\mathcal{Z} < 1$. Combined with our assumption of orthogonality of contextualizations, we then get that the correlation with the gradient update (ignoring the learning rate) is given as follows
\begin{align*}
    \mathcal{C}^{\left(a\right)} \cdot \left(-\nabla_{\bd{\psi}^{(a)}}L(\mathbf{S})\right) &=  \mathcal{C}^{\left(a\right)} \cdot \left(\mathcal{C}^{(a)} - \mathcal{D}^{(a)}\right)\\
    &= \mathcal{C}^{\left(a\right)} \cdot \mathcal{C}^{\left(a\right)} - \mathcal{C}^{\left(a\right)} \cdot \mathcal{D}^{(a)} \\
    &= \left\|\mathcal{C}^{\left(a\right)}\right\|^2_2 - \mathcal{C}^{\left(a\right)} \cdot \mathcal{D}^{(a)}\\
    &= \left\|\mathcal{C}^{\left(a\right)}\right\|^2_2 - \frac{\exp\left( \bd{\psi}^{\left(a\right)}\cdot \mathcal{C}^{\left(a\right)} + b^{\left(a\right)} \right)}{\sum_{\widetilde{a}=1}^n \exp\left( \bd{\psi}^{\left(\widetilde{a}\right)}\cdot \mathcal{C}^{\left(a\right)} + b^{\left(\widetilde{a}\right)} \right)} \left\|\mathcal{C}^{\left(a\right)}\right\|^2_2 \\
    &= \left(1 - \frac{\exp\left( \bd{\psi}^{\left(a\right)}\cdot \mathcal{C}^{\left(a\right)} + b^{\left(a\right)} \right)}{\sum_{\widetilde{a}=1}^n \exp\left( \bd{\psi}^{\left(\widetilde{a}\right)}\cdot \mathcal{C}^{\left(a\right)} + b^{\left(\widetilde{a}\right)} \right)}\right) \left\|\mathcal{C}^{\left(a\right)}\right\|^2_2 \\
    &> 0,
\end{align*}
where going from the third to the fourth line we use the fact that $\mathcal{C}^{\left(a\right)} \cdot \mathcal{C}^{\left(a'\right)}=0$ if and only if $a$ is different from $a'$. From here, since the learning rate scales each line above by $\alpha > 0$, the proof follows, as desired. 
\end{proof}
\subsection{Gradient Updates for the Feature Extractor}
For this analysis we would like to underline the dependence of the contextualizations $\mathcal{C}^{\left(a''\right)}$ on the parameters $\bd{\theta}$ of our feature extractor, by explicitly writing the dependence as follows $\mathcal{C}_{\bd{\theta}}^{\left(a''\right)}.$ Hence, differentiating Equation~\eqref{eq:loss} with respect to $\bd{\theta}$ and using notation from the previous section we obtain the following expression for the gradient update
\begin{equation*}
\nabla_{\bd{\theta}}L(\mathbf{S}) = -\sum_{a''=1}^{n}\nabla_{\bd{\theta}}\mathcal{C}_{\bd{\theta}}^{\left(a''\right)} \bd{\psi}^{\left(a''\right)}+\sum_{a''=1}^{n}\sum_{a'=1}^{n}\frac{\exp\left( \bd{\psi}^{\left(a'\right)}\cdot \mathcal{C}_{\bd{\theta}}^{\left(a''\right)} + b^{\left(a\right)} \right)}{\mathcal{Z}}\nabla_{\bd{\theta}}\mathcal{C}_{\bd{\theta}}^{\left(a''\right)}\bd{\psi}^{\left(a'\right)}.
\end{equation*}
We proceed with the following
\begin{prop}
With the contextualization algorithm, the gradient updates for the feature extractor share gradient information from each example in the support set, as opposed to MAML and ProtoMAML.
\end{prop}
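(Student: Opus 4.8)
The plan is to show that the feature-extractor gradient $\nabla_{\bd{\theta}}L(\mathbf{S})$, displayed just above the proposition, has identical outer structure in both settings, so the only place the two algorithms differ is the Jacobian factor $\nabla_{\bd{\theta}}\mathcal{C}_{\bd{\theta}}^{(a'')}$; the whole argument then reduces to contrasting how this factor depends on the support set. First I would recall, from the notation for the 1-shot setting, that for MAML, ProtoMAML and any non-contextualizing gradient method one has $\mathcal{C}_{\bd{\theta}}^{(a'')}=f_{\bd{\theta}}(\mathbf{x}^{(a'')})$, whence $\nabla_{\bd{\theta}}\mathcal{C}_{\bd{\theta}}^{(a'')}=\nabla_{\bd{\theta}}f_{\bd{\theta}}(\mathbf{x}^{(a'')})$ depends on the single example $\mathbf{x}^{(a'')}$ alone. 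Substituting into the displayed expression for $\nabla_{\bd{\theta}}L(\mathbf{S})$, every summand then carries a Jacobian that isolates one support example; the examples interact only through the scalar softmax weights and the shared head weights $\bd{\psi}^{(a')}$, never through their feature gradients.

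Next, for the contextualization algorithm I would invoke the linear-combination form of the contextualized feature established in the discussion of Equation~\eqref{eq:expansion}, namely $\mathcal{C}^{(a'')}=\widetilde{\mathbf{x}}^{(a'')}\propto\sum_j u_j^{(a'')} f_{\bd{\theta}}(\mathbf{x}_j)+\sum_a v_a^{(a'')}\mathbf{c}^{(a)}+f_{\bd{\theta}}(\mathbf{x}^{(a'')})$, where the attention weights now carry a superscript indicating which example is being contextualized. Applying the stop-gradient rule ``no gradient passes through the contextualizers,'' the terms $\mathbf{c}^{(a)}$ are treated as constants, so that the gradient-carrying part is $\sum_j u_j^{(a'')} f_{\bd{\theta}}(\mathbf{x}_j)+f_{\bd{\theta}}(\mathbf{x}^{(a'')})$. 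Differentiating, $\nabla_{\bd{\theta}}\mathcal{C}_{\bd{\theta}}^{(a'')}$ is a weighted mixture of $\nabla_{\bd{\theta}}f_{\bd{\theta}}(\mathbf{x}_j)$ over \emph{all} $j$ (and if one additionally differentiates the $\bd{\theta}$-dependent attention weights $u_j^{(a'')}$, the coupling across examples is only reinforced). Substituting this into the displayed gradient for $\nabla_{\bd{\theta}}L(\mathbf{S})$ then shows that each summand already blends the feature gradients of every support example, which is exactly the claimed sharing.

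I would close by contrasting the two results explicitly: in MAML/ProtoMAML the per-class summand of $\nabla_{\bd{\theta}}L(\mathbf{S})$ points in the direction $\nabla_{\bd{\theta}}f_{\bd{\theta}}(\mathbf{x}^{(a'')})$ of a lone example, whereas under contextualization the corresponding summand points in a direction $\nabla_{\bd{\theta}}\mathcal{C}_{\bd{\theta}}^{(a'')}$ that is a mixture of the Jacobians of all support examples. The main obstacle, and the point demanding the most care, is precisely this distinction: MAML already couples examples through the partition function $\mathcal{Z}$ and the shared head weights, so the novelty is not coupling per se but coupling at the level of the feature-gradient \emph{directions}. I would therefore state the claim at the level of the Jacobian factor $\nabla_{\bd{\theta}}\mathcal{C}_{\bd{\theta}}^{(a'')}$ rather than the scalar coefficients, and justify carefully which parts of $\mathcal{C}^{(a'')}$ survive the stop-gradient on the contextualizers. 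A secondary subtlety is whether to treat the attention weights $u_j^{(a'')}$ as $\bd{\theta}$-dependent; I would note that including their derivatives does not change, but only strengthens, the conclusion, so it is safe to present the simpler constant-weight version first.
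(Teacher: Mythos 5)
Your proof is correct and follows essentially the same route as the paper's: both reduce the claim to the Jacobian factor $\nabla_{\bd{\theta}}\mathcal{C}_{\bd{\theta}}^{(a'')}$, exploit the fact that self-attention makes the contextualized feature a linear combination over all support features, and differentiate to exhibit a weighted mixture of $\nabla_{\bd{\theta}}f_{\bd{\theta}}\left(\mathbf{x}^{(\gamma)}\right)$ across every support example, contrasted with the single-example Jacobian $\nabla_{\bd{\theta}}f_{\bd{\theta}}\left(\mathbf{x}^{(a'')}\right)$ of MAML/ProtoMAML. If anything your write-up is slightly more careful than the paper's: you treat the stop-gradient on the context vectors explicitly (the paper instead expands the attention mechanism with its value weights, layer norm, and output layer, absorbing the context's dependence on support features ``without loss of generality''), and your observation that MAML already couples examples through the partition function $\mathcal{Z}$ and the shared head weights---so that the substantive claim is about coupling at the level of feature-gradient directions---is a clarification the paper leaves implicit.
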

\begin{proof}
It suffices to analyse $\nabla_{\bd{\theta}}\mathcal{C}_{\bd{\theta}}^{\left(a''\right)}$
in the above equation. Our self-attention mechanism consists of a scaled dot product attention which yields a linear combination across the transformed (by value weights $\mathbf{W}_{\mathrm{value}}$)
inputs  to the mechanism, followed by a
layer normalization with mean $\mu$ and standard deivation $\sigma$, and then a linear layer with weights $\mathbf{W}$ and bias $\mathbf{b}$ (see section~\ref{sec:training}). Therefore, since the initalization of our context consists of self-attention over the extracted features of the support set, without loss of generality we have that
\[\mathcal{C}_{\bd{\theta}}^{\left(a''\right)} = \mathbf{W} \left( \sum_{\gamma=1}^{n}
v_\gamma^{\left(a''\right)}\frac{\mathbf{W}_{\mathrm{value}}f_{\bd{\theta}}\left(\mathbf{x}^{\left(\gamma \right)}\right)-\mu\bd{1}}{\sigma}\right)+\mathbf{b},
\]
where the coefficients $v_\gamma^{\left(a''\right)} \equiv v_\gamma^{\left(a''\right)}(\bd{\theta},\bd{\phi})$ depend both on the feature extractor and the key and query matrices.
Now, after taking the gradient with respect to $\bd{\theta}$ we obtain the following expression 
\begin{equation*}
\label{eq:featureform}
\nabla_{\bd{\theta}}\mathcal{C}_{\bd{\theta}}^{\left(a''\right)}=\mathbf{W} \left( \sum_{\gamma=1}^{n}
\nabla_{\bd{\theta}}v_\gamma^{\left(a''\right)}\frac{\mathbf{W}_{\mathrm{value}}f_{\bd{\theta}}\left(\mathbf{x}^{\left(\gamma \right)}\right)-\mu\bd{1}}{\sigma}+\boxed{\sum_{\gamma=1}^{n}\frac{v_\gamma^{\left(a''\right)}}{\sigma}\mathbf{W}_{\mathrm{value}}\nabla_{\bd{\theta}}f_{\bd{\theta}}\left(\mathbf{x}^{\left(\gamma
\right)}\right)}\; \right),
\end{equation*}
where we have boxed the contribution from the gradient information coming from \emph{all} support inputs. Contrast this with 
$\mathcal{C}^{\left(a''\right)}=f_{\bd{\theta}}\left(\mathbf{x}^{\left(a''\right)}\right)$ for MAML and ProtoMAML, which yields gradient information only for the support example $a''.$ Thus, the statement follows.
\end{proof}
This proposition is significant since it can explain why the feature extractor yields better intra-class similarity, as we presented in the main text. We conjecture that this is true since gradient information flows from all support exmples in a controlled manner, manifested by the coefficients $v_\gamma^{\left(a''\right)}/\sigma$, which are learned by the self-attention mechanism.


\subsection{Results in the Broader Context of Meta Learning}
We should note that in this section we have described how the self-attention controls the gradient updates during fine-tuning. This emphasizes the role of the self-attention as a meta learner in a similar fashion to how LSTMs can be used as meta learners~\citep{ravi2017}. 
\section{Attention Loss and Heatmaps}
\label{sec:heatmaps}
As noted in the main paper, in some datasets ProtoContext's attention mechanism focuses almost all the attention for a given sample on the contextualizer for its class. However, this is not the case for all datasets. To try and remedy this, we introduce an ``Attention Loss'' which is a regularization term we append to our loss during both inner and outer loop training. The new loss for each sample is defined as follows:
\begin{equation*}
    L_{\mathrm{attention}} = L_{\mathrm{CE}} - \sum_{i=1}^n 1^i_{c}\log(\bd{\Xi}_i) + \left(1- 1^i_{c}\right)\log(1-\bd{\Xi}_i)
\end{equation*}
Where $L_{\mathrm{CE}}$ is the normal cross entropy loss, $\bd{\Xi}_i$ is the attention weight given to the contextualizer for class $i$ for this sample and $1^i_{c}$ is an indicator random variable that is 1 if and only if our sample is of class $i$. We find that delaying addition of this loss until 20 epochs have passed works best, as it allows the self-attention mechanism to learn on more well-defined features. In Figure~\ref{fig:aircraftHeatmap} we show the heatmaps for the Head Contextualization version of ProtoContext on Aircraft's test set.  In both the 1-shot and 5-shot setting we see that the monolithic strategy of focusing on a single contextualizer is broken up when we use attention loss, and especially in the 5-shot setting we see the formation of a regime in which each sample focuses on the contextualizer for its class. In Figure~\ref{fig:miniImagenettHeatmap} we show the heatmaps the Head Contextualization version of ProtoContext on Mini-ImageNet's test set. Although we do not see quite as clear a regime as in Aircraft, it is still obvious that the monolithic attention strategy is broken up by our Attention Loss. Figure~\ref{fig:TieredImagenetHeatmap} shows a similiar picture to Figure~\ref{fig:aircraftHeatmap}, where we see a breakup of the single-contextualizer focus strategy in the 1-shot setting and a clear trend towards focusing on the contextualizer for a sample's class in the 5-shot setting. We do not include heatmaps for the Omniglot dataset because even without the Attention Loss, ProtoContext shows a strong focus on the contextualizer corresponding to each sample's class (see the main paper).
\begin{figure}
\caption{Heatmaps of the attention weights for the Aircraft dataset on the query set. Top row is 1 shot, bottom row is 5-shot, left column is without the attention loss, right column is with the attention loss. }
\begin{minipage}[t]{1.0\linewidth}
\centering
  \includegraphics[width=0.45\textwidth]{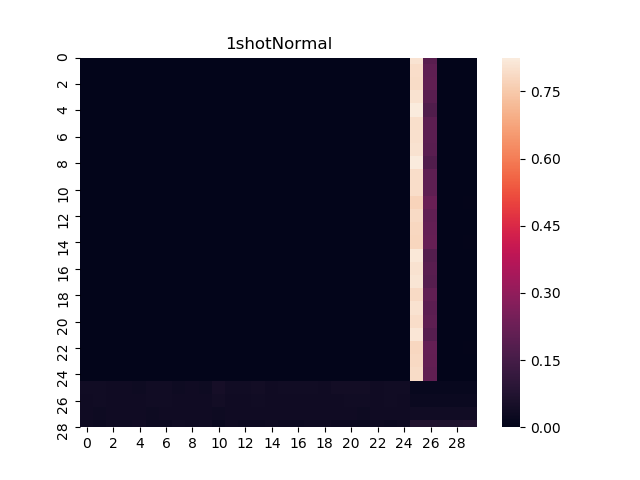}
  \includegraphics[width=0.45\textwidth]{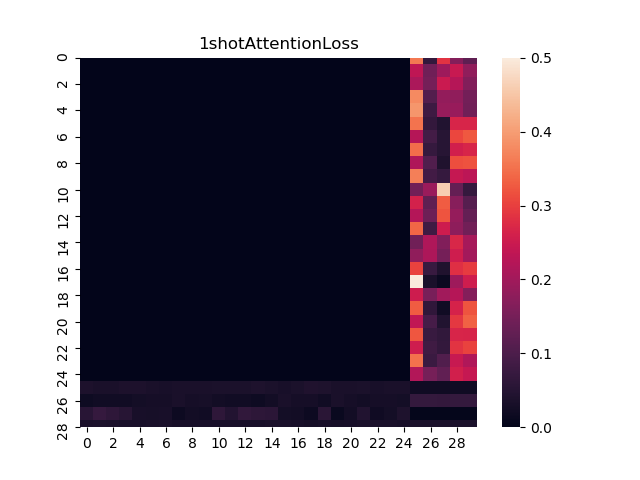}
   \includegraphics[width=0.45\textwidth]{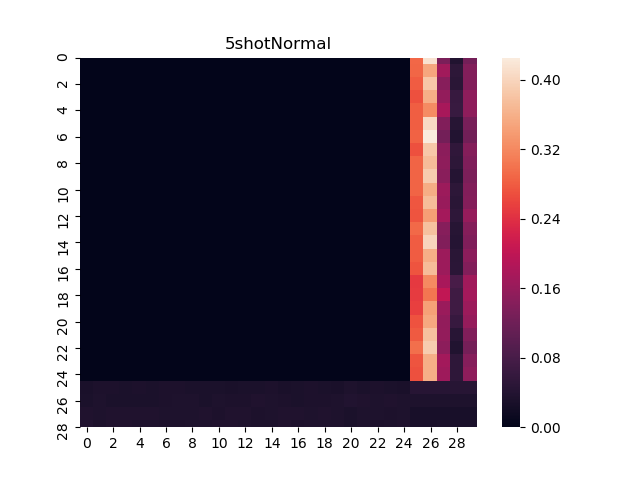}
  \includegraphics[width=0.45\textwidth]{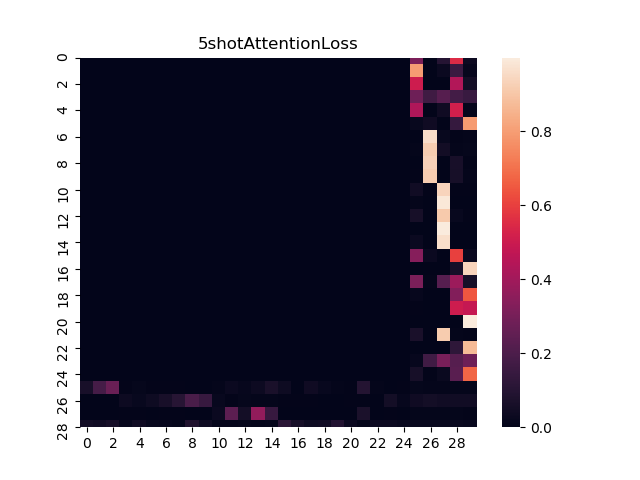}

\end{minipage}

\label{fig:aircraftHeatmap}
\end{figure}

\begin{figure}
\caption{Heatmaps of the attention weights for Mini-Imagenet on the query set. Top row is 1 shot, bottom row is 5 shot, left column is without the attention loss, right column is with the attention loss.}
\begin{minipage}[b]{1.0\linewidth}
\centering
  \includegraphics[width=0.45\textwidth]{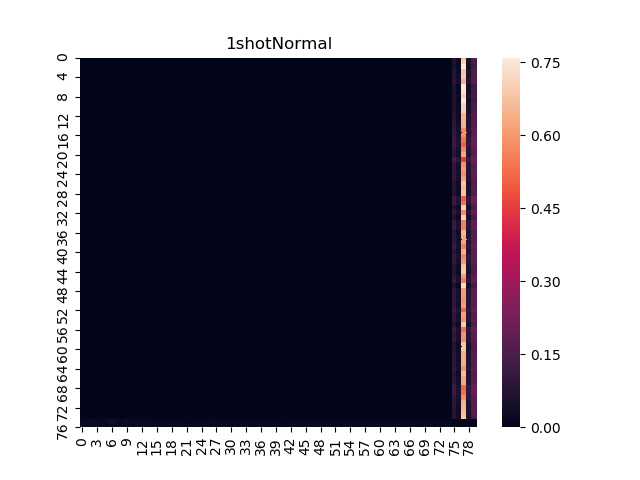}
  \includegraphics[width=0.45\textwidth]{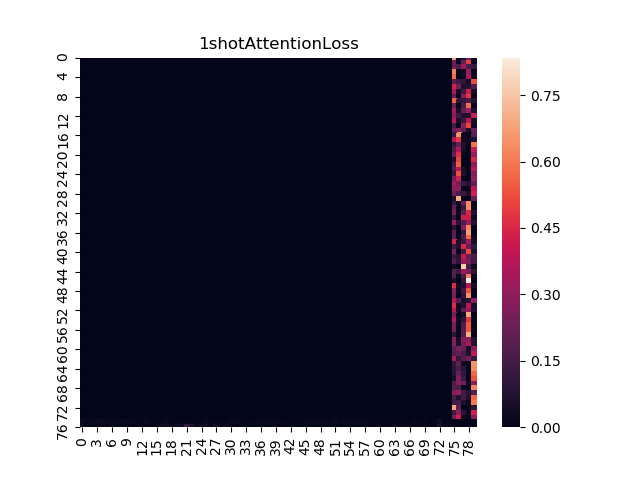}
   \includegraphics[width=0.45\textwidth]{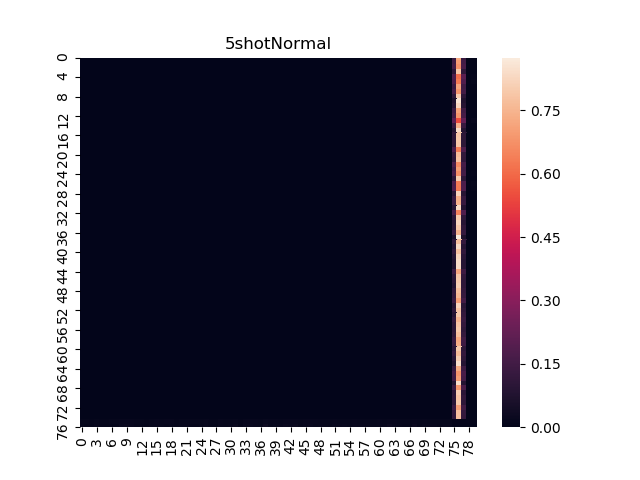}
  \includegraphics[width=0.45\textwidth]{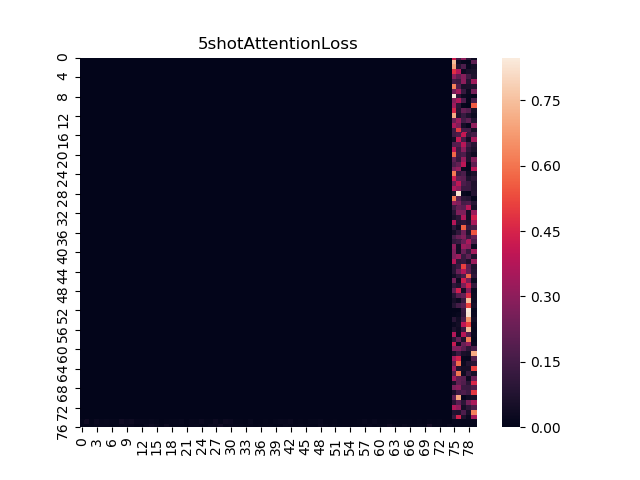}
\end{minipage}

\label{fig:miniImagenettHeatmap}
\end{figure}

\begin{figure}
\caption{Heatmaps of the attention weights for Tiered-ImageNet on the query set. Top row is 1 shot, bottom row is 5 shot, left column is without the attention loss, right column is with the attention loss.}
\begin{minipage}[b]{1.0\linewidth}
\centering
  \includegraphics[width=0.45\textwidth]{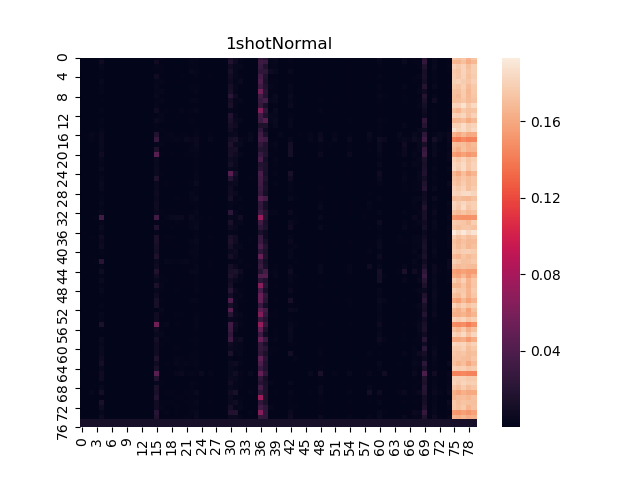}
  \includegraphics[width=0.45\textwidth]{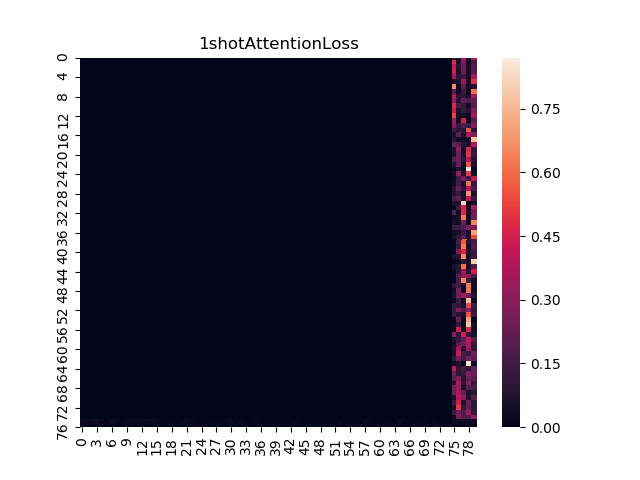}
   \includegraphics[width=0.45\textwidth]{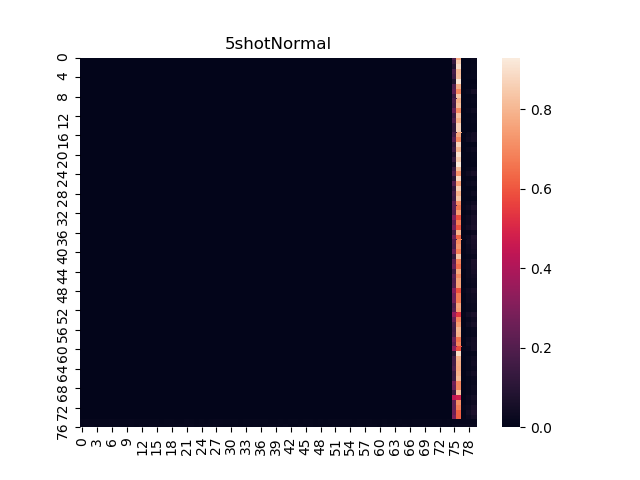}
  \includegraphics[width=0.45\textwidth]{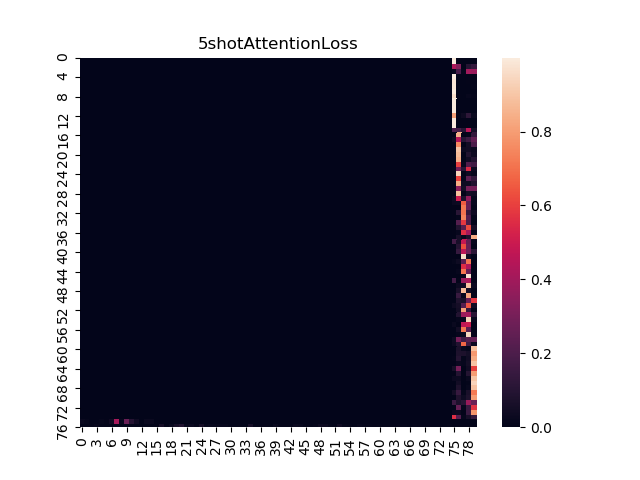}
\end{minipage}

\label{fig:TieredImagenetHeatmap}

\end{figure}


\section{Datasets}
\label{sec:datasets}
We experiment on four standard few shot classification datasets: Omniglot, Mini-Imagenet, Tiered-Imagenet, and FGCV Aircraft (Airplanes).
\paragraph{Omniglot}
The Omniglot dataset~\citep{lake2015human} consists of 1623 handwritten characters from 50 different alphabets. Within each alphabet, every character of the alphabet corresponds to a unique class and has several examples. In some uses of the Omniglot dataset \citep{antoniou2019, finn2017}, a task is created by sampling classes across alphabets with no regard to the structure of the dataset. This makes examples from different classes fairly distinct, allowing for simple use of prototypical representations for classification. We use a version more similar to that presented by \citet{lake2015human} which increases the similarity between classes in tasks sampled from the dataset, creating harder classification problems that requires more complex class representations, such as contextualizers. To realize this form of the dataset, we construct a sampling regime that incorporates the structure of the Omniglot dataset. Instead of sampling classes across all alphabets, we first select an alphabet uniformly at random and then select classes from that alphabet. If we assume that classes from the same alphabet will be more similar to each other than those from different alphabets, this regime increases the difficulty of the tasks drawn. We analyze results on 20-way classification, as with fewer ways all models solve the dataset nearly perfectly and there are no meaningful differences to be seen. 

\paragraph{Mini-Imagenet}
Mini-Imagenet \citep{vinyals2016} is a frequently used benchmark for meta learning. It is a subset of the Imagenet dataset containing 100 classes from the Imagenet dataset with 600 samples per class. We use the same splits as \citet{antoniou2019}.

\paragraph{Tiered-Imagenet}
Tiered Imagenet \citep{ren2018meta}  is also a classic benchmark in meta learning. It is made of 608 classes grouped into 34 high level sets based on the Imagenet hierarchy, 28 for training, 6 for validation, and 6 for testing. We believe
that by grouping similar classes together, harder tasks are produced. We note that this is an environment particularly apt for our contextualizer-based model, as the model is able to incorporate relations between the contextualizer and the samples to evaluate samples that are highly similar.

\paragraph{Aircraft}
The FGVC Airplanes benchmark \citep{maji2013fine} is, like Tiered Imagenet, a more fine-grained classification benchmark. It consists of 102 different aircraft model variants with 100 images of each. The dataset also has two coarser groupings of airplanes into ``Families'' and ``Manufacturers,'' however we disregard these in favor of the more challenging fine-grain task that uses variants as classes.

On all datasets, we measure performance for 1-shot and 5-shot learning. With the exception of  Omniglot, we perform all experiments with 5 ways. All experiments are performed on three separate seeds with the average result reported.

\section{Xavier Initialization}
\label{sec:xavier}
Although it is not proposed in~\citep{triantafillou2019}, we find empirically that scaling the last layer initialization used in ProtoMAML, ProtoContext, and PCABaseline to the magnitudes of \emph{Xavier initialization} proposed in \citet{Glorot2010UnderstandingTD} bolstered convergence and in some cases made convergence possible for our models. Specifically, if the initializations for the weights and bias are respectively $\mathbf{H}_w$ and $\mathbf{H}_b$, the feature size is $d_\mathrm{feature}$ and the output classes are $n,$ we compute as follows
\begin{equation*}
f = \frac{1}{\max(\mathbf{H}_b)}\sqrt{\frac{6}{d_\mathrm{feature}+n}}
\label{scalar}
\end{equation*}
and then rescale elementwise the intializations as $f \cdot \mathbf{H}_w$ and $f \cdot \mathbf{H}_b$ respectively.
\section{Training Specifications}
\label{sec:training}
In all models, our feature extractor is the same VGG~\citep{simonyan2014convolutional} architecture used in MAML++. In the case of ProtoMAML, ProtoContext, and PCA Baseline the head of the architecture is initialized in the manner detailed in the main paper. This means the head is not meta trained, as it is re-initialized at the beginning of each task. For our ProtoContext and Context Only models, we make use of the Transformer architecture and learning rate scheduler introduced in \citep{vaswani2017attention} for our attention mechanism.  Although we experimented with implementation of a full multi-head attention mechanism, we found that dropping some pieces of it improved performance. Our final results are reported using dot-product self-attention, and then feeding the results of that self-attention through a  layernorm layer and a feedforward layer with a skip connection around our attention mechanism and around our feedforward layer. We do not make use of multiple attention heads or stack several sub-layers instead opting for a single attention head that attends to samples and contextualizers. We present additional hyperparameters of interest in Table ~\ref{tab:hyper}. We direct the reader to the configuration files in our code submission for additional hyperparameters.
\begin{table}[h]
\caption{Additional hyperparameters.}
\begin{tabular}{lllll}
\toprule
Hyperparameter                        & Omniglot & Mini-ImageNet & Tiered-ImageNet & Aircraft \\
\cmidrule(r){1-1}
\cmidrule(r){2-2}
\cmidrule(r){3-3}
\cmidrule(r){4-4}
\cmidrule(r){5-5}
Key Dimension                  & 64       & 64            & 64              & 64       \\
Value Dimension                & 64       & 1200          & 1200            & 1200     \\
Warmup Steps (LR Scheduler)                  & 750      & 750           & 750             & 750      \\
Number of Attention Mechanisms & 1        & 1             & 1               & 1        \\
Number of Layers               & 1        & 1             & 1               & 1        \\
Inner Loop LR                  & 0.1      & 0.01          & 0.01            & 0.1      \\
Initial Outer Loop LR          & 0.001    & 0.001         & 0.001           & 0.001    \\
Feature Size                   & 64       & 1200          & 1200            & 1200     \\ 
\bottomrule
\end{tabular}
\label{tab:hyper}
\end{table}
\section{Additional Results: Quickdraw}
\label{sec:quickdraw}
A special challenge to our ProtoContext method, which is very dependent on class representation, is the Quickdraw dataset \citep{ha2017neural}. The dataset is constructed from images that thousands of users created in the Google Quick, Draw! challenge. Users were given a class and then given 20 seconds to draw an example of it. We believe that although each sample in a class is independent of each other one, they are not necessarily identically distributed as different user's drawing styles produce different distributions. As a result, creating a good representation for classes in this datset is a very difficult task. Nonetheless, in an attempt to challenge our model we train on Quickdraw and report results in Table~\ref{tab:quickdraw}. We outperform both ProtoMAML and MAML by about 1\% in 1 shot and 2\% in 5 shot. 
\begin{table}[!h]
  \small
  \caption{Accuracy for the Quickdraw dataset. All experiments done 5-way.}
  \centering
  \begin{tabular}{lcccc}
    \toprule
    \multicolumn{1}{c}{Shots}
    & \multicolumn{1}{c}{ProtoContext  (\emph{Contex. Prototypes})}
    & \multicolumn{1}{c}{ProtoContext (\emph{Head})} 
    & \multicolumn{1}{c}{ProtoMAML}
    & \multicolumn{1}{c}{MAML}\\
    \cmidrule(r){1-1}
    \cmidrule(r){2-2}
    \cmidrule(r){3-3}
    \cmidrule(r){4-4}
    \cmidrule(r){5-5}
   
    1 & $75.51\pm 1.15$ & $\mathbf{76.21} \pm 1.74$ & $73.36 \pm 0.62$ & $75.15 \pm 0.95$  \\
    5 &  $\mathbf{87.37} \pm 1.04$ & $86.87 \pm 0.86$ & $86.15 \pm 1.37$& $85.13 \pm 0.55$  \\

    \bottomrule
  \end{tabular}
  \label{tab:quickdraw}
\end{table}
\section{Additional Inner Loop Steps}
In Figure~\ref{fig:innerloop} we show plots of the additional inner loop steps for more datasets. We show the average across 3 random seeds. For MAML and ProtoMAML especially, standard deviations can be large, so we include them in a separate table below rather than clutter the figure with error bars (see Table~\ref{tab:std}). We note that for Mini-ImageNet 5-shot, as we add inner loop steps, both of our ProtoContext models increase in accuracy. This observation leads us to conclude the ProtoContext was not able to create a strong enough initialization in this dataset, instead settling on one that required additional fine-tuning. This may help us to understand why we do not obtain better results than MAML and Prototypical Networks in this setting.
\label{sec:innerloop}
\begin{figure}
    \caption{Plots of additional inner loop steps for each of the datasets, averaged across three random seeds. The legend is in the top left plot and we avoid repeating it for readability.}
\begin{minipage}[b]{1.0\linewidth}
\centering
    \includegraphics[width=0.45\textwidth]{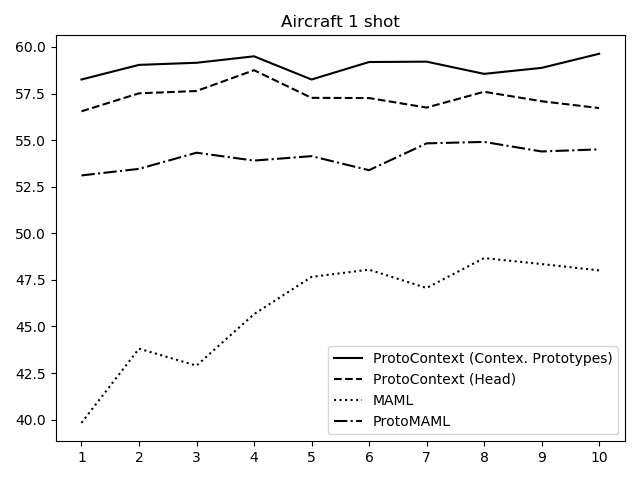}
    \includegraphics[width=0.45\textwidth]{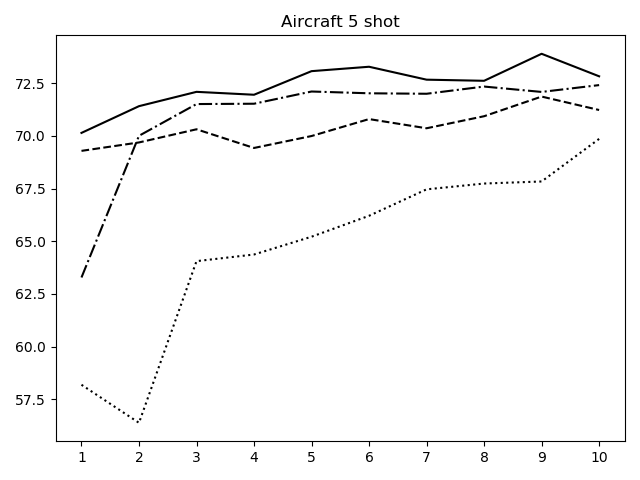}
    \includegraphics[width=0.45\textwidth]{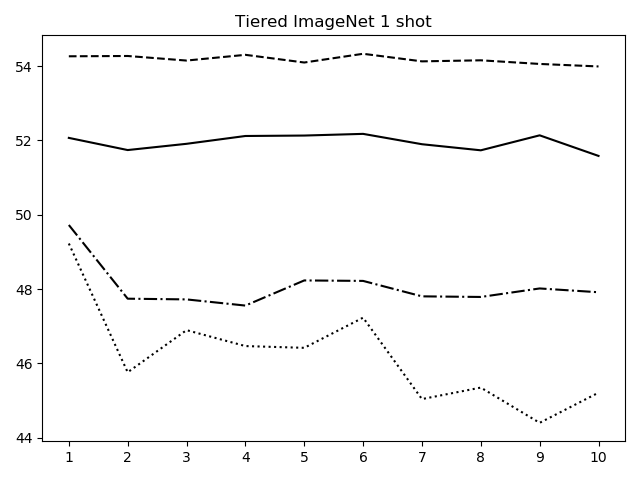}
    \includegraphics[width=0.45\textwidth]{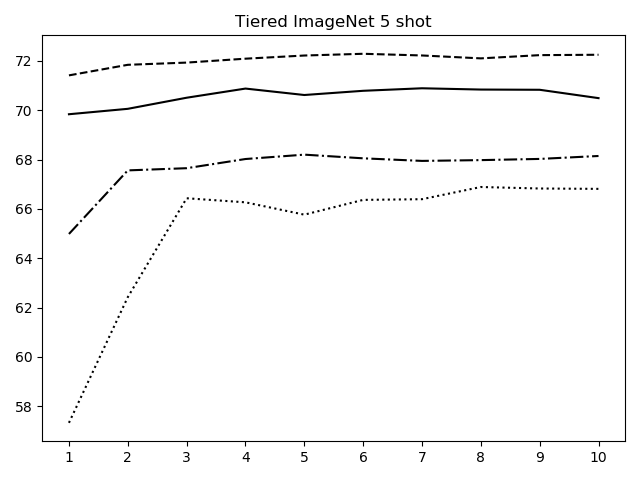}
    \includegraphics[width=0.45\textwidth]{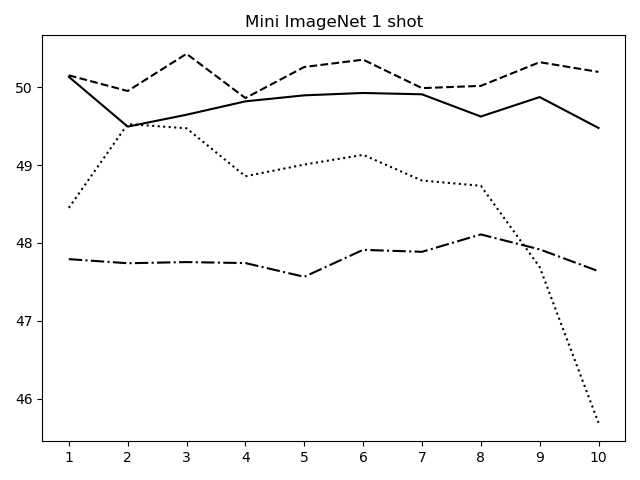}
    \includegraphics[width=0.45\textwidth]{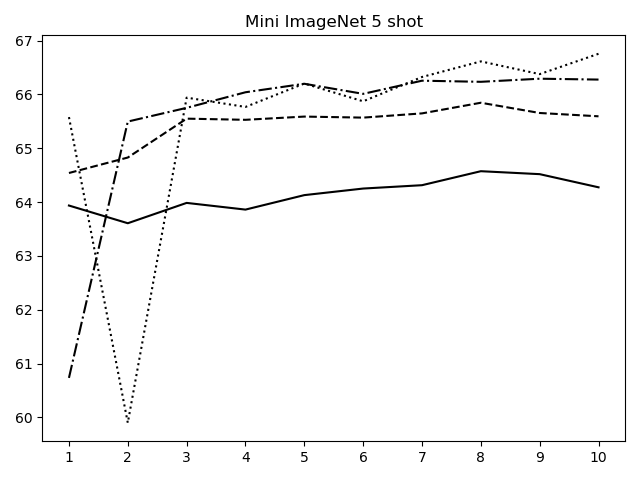}
    \includegraphics[width=0.45\textwidth]{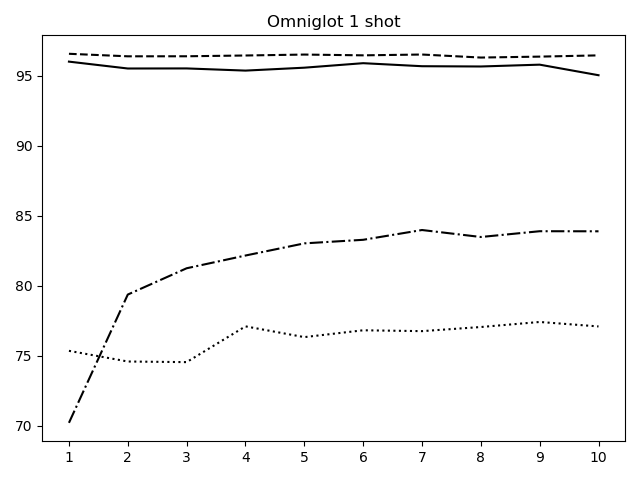}
    \includegraphics[width=0.45\textwidth]{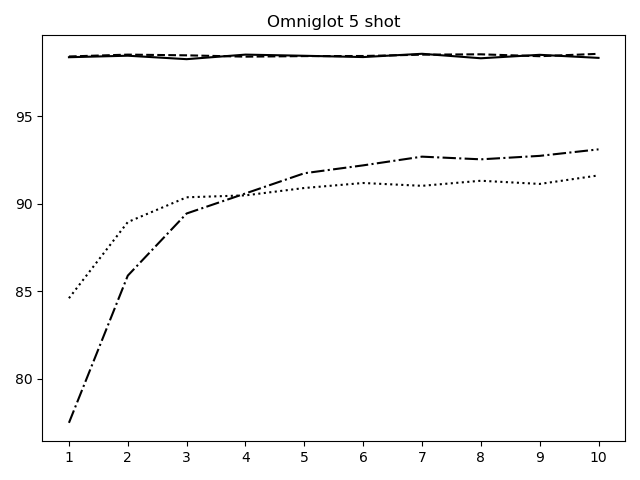}

\end{minipage}

    \label{fig:innerloop}
\end{figure}

\begin{table}[th]
  \tiny
  \centering
  \caption{Standard deviation for experiments on varying the inner loop steps.}
  \begin{tabular}{lcccccccccc}
    \toprule
    \multicolumn{11}{c}{Aircraft 1-shot} \\
    \midrule
    & \multicolumn{10}{c}{Steps} \\
    \cmidrule(r){2-11}
    \multicolumn{1}{c}{Method}
    & \multicolumn{1}{c}{1}
    & \multicolumn{1}{c}{2}
    & \multicolumn{1}{c}{3}
    & \multicolumn{1}{c}{4}
    & \multicolumn{1}{c}{5}
    & \multicolumn{1}{c}{6}
    & \multicolumn{1}{c}{7}
    & \multicolumn{1}{c}{8}
    & \multicolumn{1}{c}{9}
    & \multicolumn{1}{c}{10} \\
    \cmidrule(r){1-1}
    \cmidrule(r){2-2}
    \cmidrule(r){3-3}
    \cmidrule(r){4-4}
    \cmidrule(r){5-5}
    \cmidrule(r){6-6}
    \cmidrule(r){7-7}
    \cmidrule(r){8-8}
    \cmidrule(r){9-9}
    \cmidrule(r){10-10}
    \cmidrule(r){11-11}
    MAML & 6.95 & 0.76 & 1.27 & 0.95 & 0.87 & 1.10 & 1.99 & 1.45 & 1.78 & 1.22 \\
    ProtoMAML & 0.79 & 0.59 & 1.03 & 0.74 & 0.35 & 0.80 & 0.75 & 0.69 & 0.35 & 0.28 \\
    ProtoContext (\emph{Head}) & 0.15 & 1.23 & 0.83 & 1.36 & 0.72 & 0.61 & 0.34 & 0.45 & 0.10 & 0.37 \\
    ProtoContext (\emph{Contex. Prototypes}) & 0.12 & 0.44 & 0.54 & 0.63 & 0.72 & 0.42 & 0.22 & 0.53 & 0.64 & 1.68 \\
    \midrule
    \multicolumn{11}{c}{Aircraft 5-shot} \\
    \midrule
    & \multicolumn{10}{c}{Steps} \\
    \cmidrule(r){2-11}
    \multicolumn{1}{c}{Method}
    & \multicolumn{1}{c}{1}
    & \multicolumn{1}{c}{2}
    & \multicolumn{1}{c}{3}
    & \multicolumn{1}{c}{4}
    & \multicolumn{1}{c}{5}
    & \multicolumn{1}{c}{6}
    & \multicolumn{1}{c}{7}
    & \multicolumn{1}{c}{8}
    & \multicolumn{1}{c}{9}
    & \multicolumn{1}{c}{10} \\
    \cmidrule(r){1-1}
    \cmidrule(r){2-2}
    \cmidrule(r){3-3}
    \cmidrule(r){4-4}
    \cmidrule(r){5-5}
    \cmidrule(r){6-6}
    \cmidrule(r){7-7}
    \cmidrule(r){8-8}
    \cmidrule(r){9-9}
    \cmidrule(r){10-10}
    \cmidrule(r){11-11}
    MAML & 7.44 & 2.27 & 2.10 & 2.57 & 1.21 & 2.14 & 2.13 & 1.35 & 1.62 & 0.21 \\
    ProtoMAML & 0.47 & 0.60 & 0.20 & 0.76 & 0.41 & 0.49 & 0.62 & 0.31 & 0.31 & 0.24 \\
    ProtoContext (\emph{Head}) & 0.26 & 0.24 & 0.31 & 0.73 & 0.21 & 0.21 & 0.40 & 0.13 & 1.57 & 0.66 \\
    ProtoContext (\emph{Contex. Prototypes}) & 0.50 & 0.33 & 0.35 & 0.39 & 1.10 & 0.51 & 0.66 & 0.59 & 0.28 & 0.57 \\
    \midrule
    \multicolumn{11}{c}{Mini-Imagenet 1-shot} \\
    \midrule
    & \multicolumn{10}{c}{Steps} \\
    \cmidrule(r){2-11}
    \multicolumn{1}{c}{Method}
    & \multicolumn{1}{c}{1}
    & \multicolumn{1}{c}{2}
    & \multicolumn{1}{c}{3}
    & \multicolumn{1}{c}{4}
    & \multicolumn{1}{c}{5}
    & \multicolumn{1}{c}{6}
    & \multicolumn{1}{c}{7}
    & \multicolumn{1}{c}{8}
    & \multicolumn{1}{c}{9}
    & \multicolumn{1}{c}{10} \\
    \cmidrule(r){1-1}
    \cmidrule(r){2-2}
    \cmidrule(r){3-3}
    \cmidrule(r){4-4}
    \cmidrule(r){5-5}
    \cmidrule(r){6-6}
    \cmidrule(r){7-7}
    \cmidrule(r){8-8}
    \cmidrule(r){9-9}
    \cmidrule(r){10-10}
    \cmidrule(r){11-11}
    MAML & 2.38 & 0.39 & 0.52 & 1.63 & 0.49 & 0.38 & 0.23 & 0.42 & 1.32 & 0.28 \\
    ProtoMAML & 0.20 & 0.49 & 0.15 & 0.78 & 0.33 & 0.92 & 0.25 & 0.27 & 0.31 & 0.37 \\
    ProtoContext (\emph{Head}) & 0.17 & 0.03 & 0.19 & 0.23 & 0.26 & 0.38 & 0.21 & 0.31 & 0.09 & 0.15 \\
    ProtoContext (\emph{Contex. Prototypes}) & 0.92 & 0.32 & 0.16 & 0.21 & 0.38 & 0.25 & 0.16 & 0.53 & 0.49 & 0.71 \\
    \midrule
    \multicolumn{11}{c}{Mini-Imagenet 5-shot} \\
    \midrule
    & \multicolumn{10}{c}{Steps} \\
    \cmidrule(r){2-11}
    \multicolumn{1}{c}{Method}
    & \multicolumn{1}{c}{1}
    & \multicolumn{1}{c}{2}
    & \multicolumn{1}{c}{3}
    & \multicolumn{1}{c}{4}
    & \multicolumn{1}{c}{5}
    & \multicolumn{1}{c}{6}
    & \multicolumn{1}{c}{7}
    & \multicolumn{1}{c}{8}
    & \multicolumn{1}{c}{9}
    & \multicolumn{1}{c}{10} \\
    \cmidrule(r){1-1}
    \cmidrule(r){2-2}
    \cmidrule(r){3-3}
    \cmidrule(r){4-4}
    \cmidrule(r){5-5}
    \cmidrule(r){6-6}
    \cmidrule(r){7-7}
    \cmidrule(r){8-8}
    \cmidrule(r){9-9}
    \cmidrule(r){10-10}
    \cmidrule(r){11-11}
    MAML & 0.31 & 2.56 & 0.21 & 0.41 & 0.50 & 0.46 & 0.46 & 0.48 & 0.67 & 0.16 \\
    ProtoMAML & 0.81 & 0.22 & 0.58 & 0.66 & 0.27 & 0.42 & 0.31 & 0.40 & 0.03 & 0.30 \\
    ProtoContext (\emph{Head}) & 0.32 & 0.59 & 0.37 & 0.44 & 0.38 & 0.47 & 0.46 & 0.28 & 0.32 & 0.38 \\
    ProtoContext (\emph{Contex. Prototypes}) & 0.14 & 0.25 & 0.02 & 0.11 & 0.59 & 0.16 & 0.16 & 0.35 & 0.22 & 0.25 \\
    \midrule
    \multicolumn{11}{c}{Tiered-Imagenet 1-shot} \\
    \midrule
    & \multicolumn{10}{c}{Steps} \\
    \cmidrule(r){2-11}
    \multicolumn{1}{c}{Method}
    & \multicolumn{1}{c}{1}
    & \multicolumn{1}{c}{2}
    & \multicolumn{1}{c}{3}
    & \multicolumn{1}{c}{4}
    & \multicolumn{1}{c}{5}
    & \multicolumn{1}{c}{6}
    & \multicolumn{1}{c}{7}
    & \multicolumn{1}{c}{8}
    & \multicolumn{1}{c}{9}
    & \multicolumn{1}{c}{10} \\
    \cmidrule(r){1-1}
    \cmidrule(r){2-2}
    \cmidrule(r){3-3}
    \cmidrule(r){4-4}
    \cmidrule(r){5-5}
    \cmidrule(r){6-6}
    \cmidrule(r){7-7}
    \cmidrule(r){8-8}
    \cmidrule(r){9-9}
    \cmidrule(r){10-10}
    \cmidrule(r){11-11}
    MAML & 0.54 & 0.42 & 1.15 & 0.69 & 0.59 & 0.56 & 0.48 & 1.24 & 0.72 & 0.77 \\
    ProtoMAML & 0.50 & 0.45 & 0.59 & 0.58 & 0.37 & 0.60 & 0.34 & 0.58 & 0.18 & 0.81 \\
    ProtoContext (\emph{Head}) & 0.26 & 0.11 & 0.57 & 0.48 & 0.39 & 0.23 & 0.20 & 0.08 & 0.59 & 0.15 \\
    ProtoContext (\emph{Contex. Prototypes}) & 0.87 & 0.23 & 0.61 & 0.36 & 0.32 & 0.21 & 0.21 & 0.25 & 0.38 & 0.58 \\
    \midrule
    \multicolumn{11}{c}{Tiered-Imagenet 5-shot} \\
    \midrule
    & \multicolumn{10}{c}{Steps} \\
    \cmidrule(r){2-11}
    \multicolumn{1}{c}{Method}
    & \multicolumn{1}{c}{1}
    & \multicolumn{1}{c}{2}
    & \multicolumn{1}{c}{3}
    & \multicolumn{1}{c}{4}
    & \multicolumn{1}{c}{5}
    & \multicolumn{1}{c}{6}
    & \multicolumn{1}{c}{7}
    & \multicolumn{1}{c}{8}
    & \multicolumn{1}{c}{9}
    & \multicolumn{1}{c}{10} \\
    \cmidrule(r){1-1}
    \cmidrule(r){2-2}
    \cmidrule(r){3-3}
    \cmidrule(r){4-4}
    \cmidrule(r){5-5}
    \cmidrule(r){6-6}
    \cmidrule(r){7-7}
    \cmidrule(r){8-8}
    \cmidrule(r){9-9}
    \cmidrule(r){10-10}
    \cmidrule(r){11-11}
    MAML & 5.61 & 2.46 & 0.53 & 0.54 & 0.56 & 0.50 & 0.23 & 0.49 & 0.30 & 0.09 \\
    ProtoMAML & 0.25 & 0.41 & 0.28 & 0.33 & 0.31 & 0.27 & 0.14 & 0.25 & 0.35 & 0.28 \\
    ProtoContext (\emph{Head}) & 0.26 & 0.31 & 0.04 & 0.36 & 0.11 & 0.25 & 0.28 & 0.35 & 0.20 & 0.41 \\
    ProtoContext (\emph{Contex. Prototypes}) & 0.20 & 0.16 & 0.38 & 0.39 & 0.30 & 0.04 & 0.10 & 0.27 & 0.24 & 0.12 \\
        \multicolumn{11}{c}{Omniglot 1-shot} \\
    \midrule
    & \multicolumn{10}{c}{Steps} \\
    \cmidrule(r){2-11}
    \multicolumn{1}{c}{Method}
    & \multicolumn{1}{c}{1}
    & \multicolumn{1}{c}{2}
    & \multicolumn{1}{c}{3}
    & \multicolumn{1}{c}{4}
    & \multicolumn{1}{c}{5}
    & \multicolumn{1}{c}{6}
    & \multicolumn{1}{c}{7}
    & \multicolumn{1}{c}{8}
    & \multicolumn{1}{c}{9}
    & \multicolumn{1}{c}{10} \\
    \cmidrule(r){1-1}
    \cmidrule(r){2-2}
    \cmidrule(r){3-3}
    \cmidrule(r){4-4}
    \cmidrule(r){5-5}
    \cmidrule(r){6-6}
    \cmidrule(r){7-7}
    \cmidrule(r){8-8}
    \cmidrule(r){9-9}
    \cmidrule(r){10-10}
    \cmidrule(r){11-11}
    MAML & 1.78 & 2.12 & 2.75 & 3.13 & 2.43 & 2.49 & 3.09 & 3.46 & 2.76 & 2.66 \\
    ProtoMAML & 4.14 & 8.82 & 8.85 & 8.28 & 7.75 & 7.67 & 7.52 & 7.77 & 7.74 & 7.47 \\
    ProtoContext (\emph{Head}) & 0.01 & 0.17 & 0.23 & 0.01 & 0.20 & 0.33 & 0.15 & 0.27 & 0.18 & 0.04 \\
    ProtoContext (\emph{Contex. Prototypes}) & 0.32 & 0.19 & 0.33 & 0.01 & 0.13 & 0.16 & 0.37 & 0.49 & 0.49 & 0.78 \\
    \midrule
    \multicolumn{11}{c}{Omniglot 5-shot} \\
    \midrule
    & \multicolumn{10}{c}{Steps} \\
    \cmidrule(r){2-11}
    \multicolumn{1}{c}{Method}
    & \multicolumn{1}{c}{1}
    & \multicolumn{1}{c}{2}
    & \multicolumn{1}{c}{3}
    & \multicolumn{1}{c}{4}
    & \multicolumn{1}{c}{5}
    & \multicolumn{1}{c}{6}
    & \multicolumn{1}{c}{7}
    & \multicolumn{1}{c}{8}
    & \multicolumn{1}{c}{9}
    & \multicolumn{1}{c}{10} \\
    \cmidrule(r){1-1}
    \cmidrule(r){2-2}
    \cmidrule(r){3-3}
    \cmidrule(r){4-4}
    \cmidrule(r){5-5}
    \cmidrule(r){6-6}
    \cmidrule(r){7-7}
    \cmidrule(r){8-8}
    \cmidrule(r){9-9}
    \cmidrule(r){10-10}
    \cmidrule(r){11-11}
    MAML & 1.96 & 1.77 & 1.17 & 1.54 & 1.23 & 0.98 & 0.84 & 1.28 & 1.20 & 1.12 \\
    ProtoMAML & 3.90 & 1.84 & 2.76 & 2.29 & 1.34 & 1.55 & 1.49 & 1.44 & 1.50 & 1.34 \\
    ProtoContext (\emph{Head}) & 0.02 & 0.03 & 0.05 & 0.09 & 0.08 & 0.14 & 0.07 & 0.05 & 0.06 & 0.07 \\
    ProtoContext (\emph{Contex. Prototypes}) & 0.12 & 0.06 & 0.18 & 0.08 & 0.29 & 0.16 & 0.12 & 0.22 & 0.034 & 0.03 \\
    \bottomrule
  \end{tabular}
  \label{tab:std}
\end{table}

\clearpage

\end{document}